\documentclass[10pt,journal,compsoc]{IEEEtran}
\usepackage{cite}
\usepackage{amsmath}
\usepackage{amsfonts}
\usepackage{color}
\usepackage{comment}
\usepackage{subfigure}
\usepackage{graphicx}
\usepackage{multirow}
\usepackage{algorithm}
\usepackage{algorithmic}
\usepackage{mathtools}
\usepackage{bm}
\usepackage{url}
\usepackage{booktabs}  
\usepackage{bbding}
\usepackage[switch]{lineno}

\newtheorem{theorem}{Theorem}

\newtheorem{Lemma}{Lemma}
\newtheorem{assumption}{Assumption}

\newtheorem{proof}{Proof}
\newtheorem{mydef}{Definition}

\hyphenation{op-tical net-works semi-conduc-tor}

\begin{document}
%
\title{Model Inversion Attacks against \\ Graph Neural Networks}
%
%
%
%

\author{Zaixi Zhang,
        Qi Liu, Zhenya Huang, Hao Wang, Chee-Kong Lee,
        and Enhong Chen
\IEEEcompsocitemizethanks{\IEEEcompsocthanksitem Zaixi Zhang, Qi Liu, Zhenya Huang, Hao Wang, and Enhong Chen are with the Anhui Province Key Laboratory of BIg Data Analysis and Application (BDAA), School of Computer Science and Technology, University of Science and Technology of China, Hefei, Anhui 230027, China.\protect\\
E-mail: $\{$zaixi, haowang3$\}$@mail.ustc.edu.cn, $\{$qiliuql, huangzhy, cheneh$\}$@ustc.edu.cn
\IEEEcompsocthanksitem Chee-Kong Lee is with Tencent Quantum Lab. \protect\\
E-mail: cheekonglee@tencent.com}
\thanks{Manuscript received February 24, 2022; accepted September 9, 2022.\\Qi Liu is the corresponding author.}}

\markboth{IEEE TRANSACTIONS ON KNOWLEDGE AND DATA ENGINEERING}%
{Shell \MakeLowercase{\textit{et al.}}: Bare Advanced Demo of IEEEtran.cls for IEEE Computer Society Journals}

\IEEEtitleabstractindextext{%
\begin{abstract}
Many data mining tasks rely on graphs to model relational structures among individuals (nodes). Since relational data are often sensitive, there is an urgent need to evaluate the privacy risks in graph data. 
One famous privacy attack against data analysis models is the model inversion attack, which aims to infer sensitive data in the training dataset and leads to great privacy concerns. Despite its success in grid-like domains, directly applying model inversion attacks on non-grid domains such as graph leads to poor attack performance. This is mainly due to the failure to consider the unique properties of graphs. To bridge this gap, we conduct a systematic study on model inversion attacks against Graph Neural Networks (GNNs), one of the state-of-the-art graph analysis tools in this paper. 
Firstly, in the white-box setting where the attacker has full access to the target GNN model, we present GraphMI to infer the private training graph data.
Specifically in GraphMI, a projected gradient module is proposed to tackle the discreteness of graph edges and preserve the sparsity and smoothness of graph features; a graph auto-encoder module is used to efficiently exploit graph topology, node attributes, and target model parameters for edge inference; a random sampling module can finally sample discrete edges. Furthermore, in the hard-label black-box setting where the attacker can only query the GNN API and receive the classification results, we propose two methods based on gradient estimation and reinforcement learning (RL-GraphMI).
With the proposed methods, we study the connection between model inversion risk and edge influence and show that edges with greater influence are more likely to be recovered.
Extensive experiments over several public datasets demonstrate the effectiveness of our methods.
We also evaluate our attacks under two defenses: one is the well-designed differential private training, and the other is graph preprocessing. Our experimental results show that such defenses are not sufficiently effective and call for more advanced defenses against privacy attacks.
\end{abstract}

\begin{IEEEkeywords}
Data Privacy, Graphs and Networks, Graph Neural Networks, Privacy Attacks, Model Inversion Attacks
\end{IEEEkeywords}}
\maketitle

\IEEEdisplaynontitleabstractindextext

%
\vspace{-0.6cm}
\section{Introduction}

\IEEEPARstart{G}{raph} structured data have enabled a series successful applications such as as recommendation systems \cite{zhang2020personalized}, social relationship analysis \cite{fan2020graph}, and molecule property prediction\cite{zhang2021motif}, given the rich node attributes and edges information. As one of the state-of-the-art graph analysis tools, Graph Neural Networks (GNNs) \cite{kipf2016semi,velivckovic2017graph, hamilton2017inductive} has shown excellent performances in various tasks on graph. Typically, GNNs follow a message passing paradigm to aggregate the information from neighboring nodes based on edge connections, update node representations, and obtain the embeddings for downstream tasks such as node/graph classification and link prediction \cite{pmlr-v70-gilmer17a}. 

\begin{figure}[!t]
	\centering
	\includegraphics[width=3.0 in]{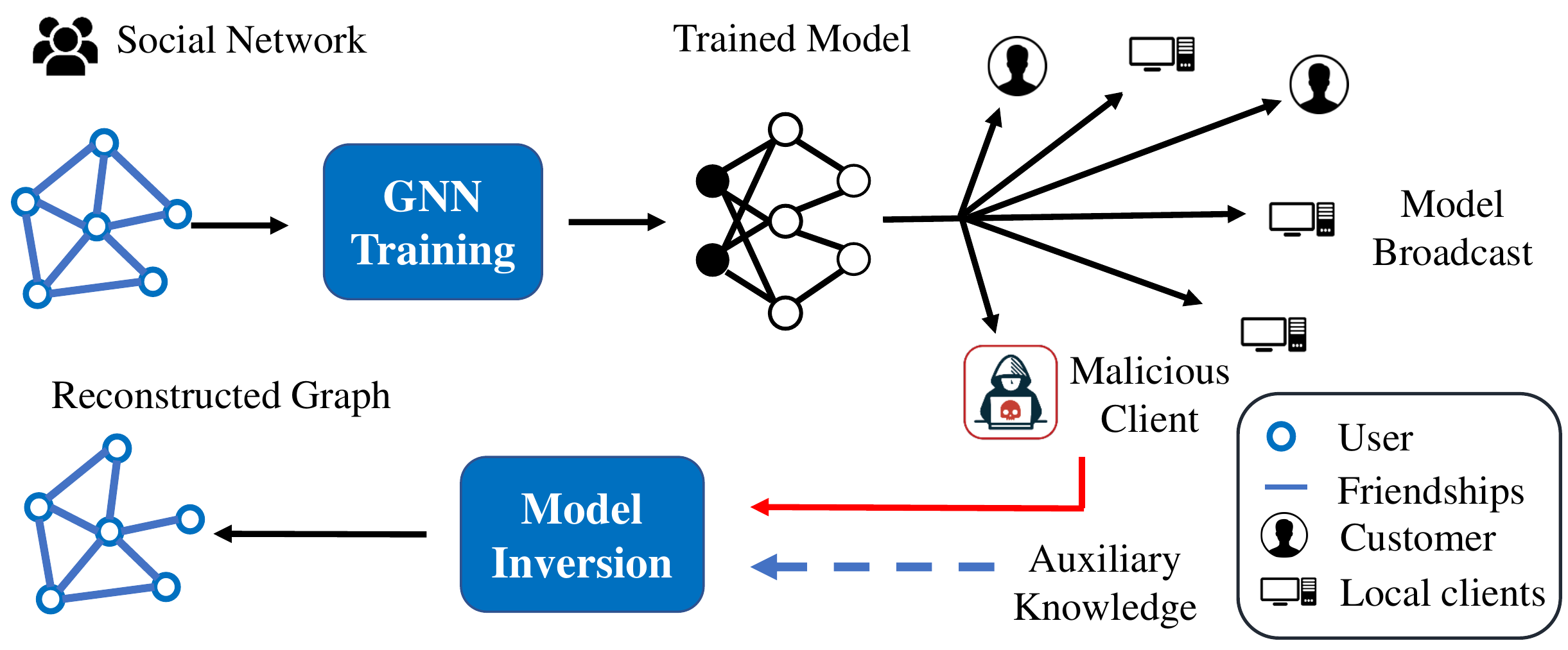}
	\caption{One motivation scenario in social networks where the attacker aims to reconstruct the friendship through the model inversion attack. Users'  friendships are sensitive relational data, and users want to keep them private. Sometimes social network data is collected with user permission to train GNN models for better service. If the attacker can obtain the trained GNN model from malicious clients, with some auxiliary knowledge crawled from the internet, model inversion attack can be performed to reconstruct the friendships among users.
	}
	\vspace{-0.5cm}
	\label{motivation}
\end{figure}
However, the fact that many GNN-based applications rely on processing sensitive graph data raises great privacy concerns. Attackers may exploit the output (i.e., black-box attack) or the parameters (i.e., white-box attack) of trained graph neural networks to potentially reveal sensitive information in the training graph data. Of particular interest to this paper is the model inversion attack which aims to extract sensitive features of training data given output labels and partial knowledge of non-sensitive features. Studying model inversion attack on GNNs helps us understand the vulnerability of GNN models and enable us to evaluate and avoid privacy risks in advance.

\textbf{Motivation scenario:}
Figure \ref{motivation} shows one concrete motivation scenario of model inversion attack on GNN models. Users'  friendships are sensitive relational data, and users want to keep them private. Sometimes social network data is collected with user permission to train GNN models for better service. For example, these trained GNN models are used to classify friends or recommend advertisements. Then, the trained models are broadcast to customers or local clients. If the attacker can obtain the trained GNN model from malicious clients, with some auxiliary knowledge crawled from the internet, model inversion attack can be performed to reconstruct friendships among users.

Model inversion attack was firstly introduced by \cite{fredrikson2014privacy}, where an attacker, given a linear regression model for personalized medicine and some demographic information about a patient, could predict the patient’s genetic markers. Generally, model inversion relies on the correlation between features and output labels and tries to maximize a posteriori (MAP) or likelihood estimation (MLE) to recover sensitive features.
Recently, efforts have been made to extend model inversion to attack other machine learning models, in particular Convolutional Neural Networks (CNN) ~\cite{fredrikson2015model,aivodji2019gamin}. Thus far, most model inversion attacks are investigated in the grid-like domain (e.g., images), leaving its effect on the non-grid domain (e.g., graph-structured data) an open problem. 


We draw attention to the model inversion attack that aim to extract private graph data from GNN models. We focus on the following adversarial scenario: Given the trained GNN model and some auxiliary knowledge (node labels and attributes), the adversary wants to reconstruct all the edges among nodes in the training dataset. However, directly adapting model inversion attack to graphs leads to poor attack performance due to the following reasons. Firstly, existing model inversion attack methods can barely be applied to the graph setting due to the discrete nature of graphs. Different from the continuous image data, gradient computation and optimization on binary edges of the graph are difficult.
Secondly, current model inversion methods fail to exploit the intrinsic properties of graph such as sparsity and feature smoothness. In addition, existing model inversion attack methods cannot fully leverage the information of node attributes and GNN models. For example, node pairs with similar attributes or embeddings are more likely to have edges. 

To address the aforementioned challenges, in our preliminary work \cite{ijcai2021-516}, we proposed \textbf{Graph} \textbf{M}odel \textbf{I}nversion attack (\textbf{GraphMI}) for edge reconstruction.
GraphMI is designed with three important modules: the projected gradient module, the graph auto-encoder module, and the random sampling module.
The projected gradient module is able to tackle the edge discreteness via convex relaxation while preserving graph sparsity and feature smoothness.
The graph auto-encoder module is designed to take all the information of node attributes, graph topology, and target model parameters into consideration for graph reconstruction.
The random sampling module is used to recover the binary adjacency matrix. With these delicately designed modules, GraphMI can effectively infer the private edges in the training graph.
We systematically evaluate our attack and compare it with baseline attacks on real-world datasets from four different fields and three representative GNN methods \footnote{https://github.com/zaixizhang/GraphMI}. 
Based on GraphMI, we also investigate the relation between edge influence and model inversion risk and find that edges with greater influence are more likely to be reconstructed.

However, GraphMI is only performed in the white-box setting, where the attacker requires full access to the target model to take derivations with respect to the adjacency matrix \cite{ijcai2021-516}. Thus, it is infeasible in the more challenging hard-label black-box setting where the attacker can only query the GNN API and receive the classification results. Therefore, in this paper we extend GraphMI and propose two black-box attacks based on gradient estimation and reinforcement learning. We also explore the countermeasures against our attack including differential private training \cite{abadi2016deep} and graph preprocessing based on edge perturbation. Specifically in graph preprocessing, three strategies including randomly rewiring, adding new edges and randomly flipping are tried. The key idea to is to ``hide" real edges from model inversion by perturbing the graph structure. Experiments show that these defenses are not effective enough to achieve a satisfying trade-off between node classification accuracy and robustness to GraphMI.  

In summary, we  make the following contributions.
\begin{itemize}
    \item  We  propose GraphMI, a novel model inversion attack against Graph Neural Networks. Our attack is specifically designed and optimized for extracting private  graph-structured data from GNNs.
    \item We extend GraphMI to the black-box setting and propose two attack methods, gradient estimation and RL-GraphMI.
    \item We explore the countermeasures against our attack. We empirically show that our attack is still effective even with differential private training or graph preprocessing.
\end{itemize}
The rest of our paper is organized as follows: In Section 2, we review different attack and defense methods for GNNs; in Section 3, we introduce the problem formulation; in Section 4, we first introduce the design of GraphMI framework and then provide the analysis on correlation  between  Edge influence and inversion risk; in Section 5, we show how to extend GraphMI to the black-box setting based on gradient estimation and reinforcement learning; in Section 6, we experimentally demonstrate the attack performance of our method; in Section 7, we explore the countermeasures against GraphMI. Finally, we show the conclusion and discuss the future works in Section 8.

\section{Related Work}
In this section, we review research works related to our proposed attacks. We refer the readers to \cite{zhang2020deep} for an in-depth overview of different GNN models, and \cite{sun2018adversarial, jin2020adversarial} for comprehensive surveys of existing adversarial attacks and defense strategies on GNNs.
\subsection{Adversarial Attacks on GNNs}
Existing studies have shown that GNNs are vulnerable to adversarial attacks \cite{dai2018adversarial, chang2020restricted, ma2019attacking, wang2019attacking, li2021adversarial, feng2019graph, xu2019topology, zugner2018adversarial,ma2020black, mu2021hard, xu2021robustness}, which deceive GNN models to make wrong predictions for graph classification or node classification. Depending on the stages when these attacks occur, these adversarial attacks can be classified into training-time poisoning attacks \cite{liu2019unified, wang2019attacking, zugner2019adversarial} and testing time adversarial attacks \cite{chen2020link, ma2020black, chen2018fast}. Based on the attacker's knowledge, adversarial attacks can also be categorized into white-box attacks \cite{xu2019topology, zugner2018adversarial} and black-box attacks \cite{ma2020black, mu2021hard, chang2020restricted}.
\subsection{Privacy Attacks on GNNs}
Based on the attacker's goal, privacy attacks can be categorized into several types \cite{rigaki2020survey}, such as membership inference attack \cite{shokri2017membership}, model extraction attack \cite{tramer2016stealing}, and model inversion attack \cite{fredrikson2015model}. Membership inference attack tries to determine whether one sample was used to train the machine learning model; Model extraction attack is one black-box privacy attack. It tries to extract information of model parameters and reconstruct one substitute model that behaves similarly to the target model. Model inversion attack, which is the focus of this paper, aims to reconstruct sensitive features corresponding to labels of target machine learning models.\par
Model inversion attack was firstly presented in \cite{fredrikson2014privacy} for linear regression models.  M. Fredrikson \textit{et al.} \cite{fredrikson2015model} extended model inversion attack to extract faces from shallow neural networks. They cast the model inversion as an optimization problem and solve the problem by gradient descent with modifications to the images. Furthermore, several model inversion attacks in the black-box setting or assisted with generative methods are proposed~\cite{aivodji2019gamin,zhang2020secret} in the image domain. In addition, some recent papers started to  study the factors affecting a model’s vulnerability to model inversion attacks theoretically \cite{wu2016methodology,yeom2018privacy}.\par
Different from adversarial attacks, only a few studies \cite{wu2020model, he2021stealing, duddu2020quantifying, zhang2021inference, hsieh2021netfense} focused on privacy attacks on GNNs. For instance, Wu \emph{et al.} \cite{wu2020model} discussed GNN model extraction attack. Given  various levels of background knowledge, they tried to gather both the input-output query pairs and the graph structure to reconstruct a duplicated GNN model. He \emph{et al.} \cite{he2021stealing} and Duddu \emph{et al.} \cite{duddu2020quantifying} tried to infer the edges based on node embeddings from GNNs. Zhang \emph{et al.} \cite{zhang2021inference} proposed three inference attacks against graph embeddings.
Thus far, no researcher has explored model inversion attacks against Graph Neural Networks.
\subsection{Defense of Attacks on GNNs}
Most defense mechanisms focus on mitigating adversarial attacks by \emph{e.g.} graph sanitization \cite{wu2019adversarial}, adversarial training \cite{dai2019adversarial,multiview}, and certification of robustness \cite{bojchevski2019certifiable}.
On the other hand, to defend against privacy attacks, one of the most popular proposed countermeasures is differential privacy (DP) \cite{dwork2014algorithmic}. 
Differential privacy is a notion of privacy that entails that the outputs
of the model on neighboring inputs are close. This privacy requirement ends up obscuring the impact of any individual training instance on the model output.
Abadi \emph{et al.} \cite{abadi2016deep} proposed DP-SGD, which ensures $(\epsilon, \delta)-$DP through adding Gaussian noise to clipped gradients in training iterations. Hay \emph{et al.} \cite{hay2009accurate} firstly extend differential privacy to the graph setting. Since then, extensive efforts have been made on computing graph statistics under edge or node differential privacy such as
degree distribution \cite{hay2009accurate}, cut queries \cite{blocki2012johnson}, and sub-graph counting queries \cite{blocki2013differentially}. These methods are useful for analyzing graph statistics but insufficient for training a GNN model with DP. 
To evaluate the effectiveness of GraphMI, we adapt DP-SGD and propose three graph preprocessing methods as countermeasures.

\section{Problem Formulation}
We begin by providing preliminaries on GNNs. Then we will present the problem definition of model inversion attack on graph neural networks.
\subsection{Preliminaries on GNNs}
Before defining GNN, we firstly introduce the following notations of graph. Let $G=(\mathcal{V,E})$ be an undirected and unweighted graph, where $\mathcal{V}$ is the vertex (i.e. node) set with size $|\mathcal{V}| = N$, and $\mathcal{E}$ is the edge set. Denote $A\in\{0,1\}^{|\mathcal{V}|\times |\mathcal{V}|}$ as an adjacent matrix containing information of network topology and $X \in \mathbb{R}^{ |\mathcal{V}|\times l}$ as a feature matrix with dimension $l$. A graph can also be expressed as $G =\{A, X\}$. 
One task that GNN models are commonly used for is semi-supervised node classification~\cite{kipf2016semi}.
In a GNN model for node classification, each node $i$ is associated with a feature vector $\textbf{x}_i \in \mathbb{R}^l$ and a scalar label $y_i$. Given the adjacency matrix $A$ and the labeled node data $\{(\textbf{x}_i,y_i)\}_{i \in \mathcal{T}}$ ($\mathcal{T}$ denotes the training set), GNN is trained to predict the classes of unlabeled nodes. 
\par
Formally, the $k$-th layer of a GNN model obeys the message passing rule and can be modeled by one message passing phase and one aggregation update phase:
\begin{equation}
    m_v^{k+1}= \mathop{\sum}_{u\in \mathcal{N}(v)}\mathbf{M}_k(h_v^k, h_u^k, e_{uv}),
\end{equation}
\begin{equation}
    h_v^{k+1}= \mathbf{U}_k(h_v^k, m_v^{k+1}),
\end{equation}
where $\mathbf{M}_k$ denotes the message passing function and $\mathbf{U}_k$ is the vertex update function. $\mathcal{N}(v)$ is the neighbors of $v$ in graph $G$. $h_v^k $ is the feature vector of node $v$ at layer $k$ and $ e_{uv}$ denotes the edge feature. $h^0_v = \textbf{x}_v $ is the input feature vector of node $v$. \par
Specifically, Graph Convolutional Network (GCN)~\cite{kipf2016semi}, a well-established method for semi-supervised node classification, obeys the following rule to aggregate neighboring features:
\begin{equation}
    H^{k+1}= \sigma \big (\hat D^{-\frac{1}{2}} \hat A \hat D^{-\frac{1}{2}} H^k W^k \big),
\end{equation}
where $\hat A = A + I_N$ is the adjacency matrix of the graph $G$ with self connections added and $\hat D$ is a diagonal matrix with $\hat D_{ii} = \sum_j \hat A_{ij}$. $\sigma(\cdot)$ is the ReLU function. $H^k$ and $W^k$ are the feature matrix and the trainable weight matrix of the $k$-th layer respectively. $H^0 = X$ is the input feature matrix. Note that in most of this paper, we focus on two-layer GCN for the node classification. Later, we show that our graph model inversion attack can extend to other types of GNNs, including GAT~\cite{velivckovic2017graph} and GraphSAGE~\cite{hamilton2017inductive}.

\subsection{Problem Definition}
\begin{figure}[!t]
	\centering
	\includegraphics[width=0.48\textwidth]{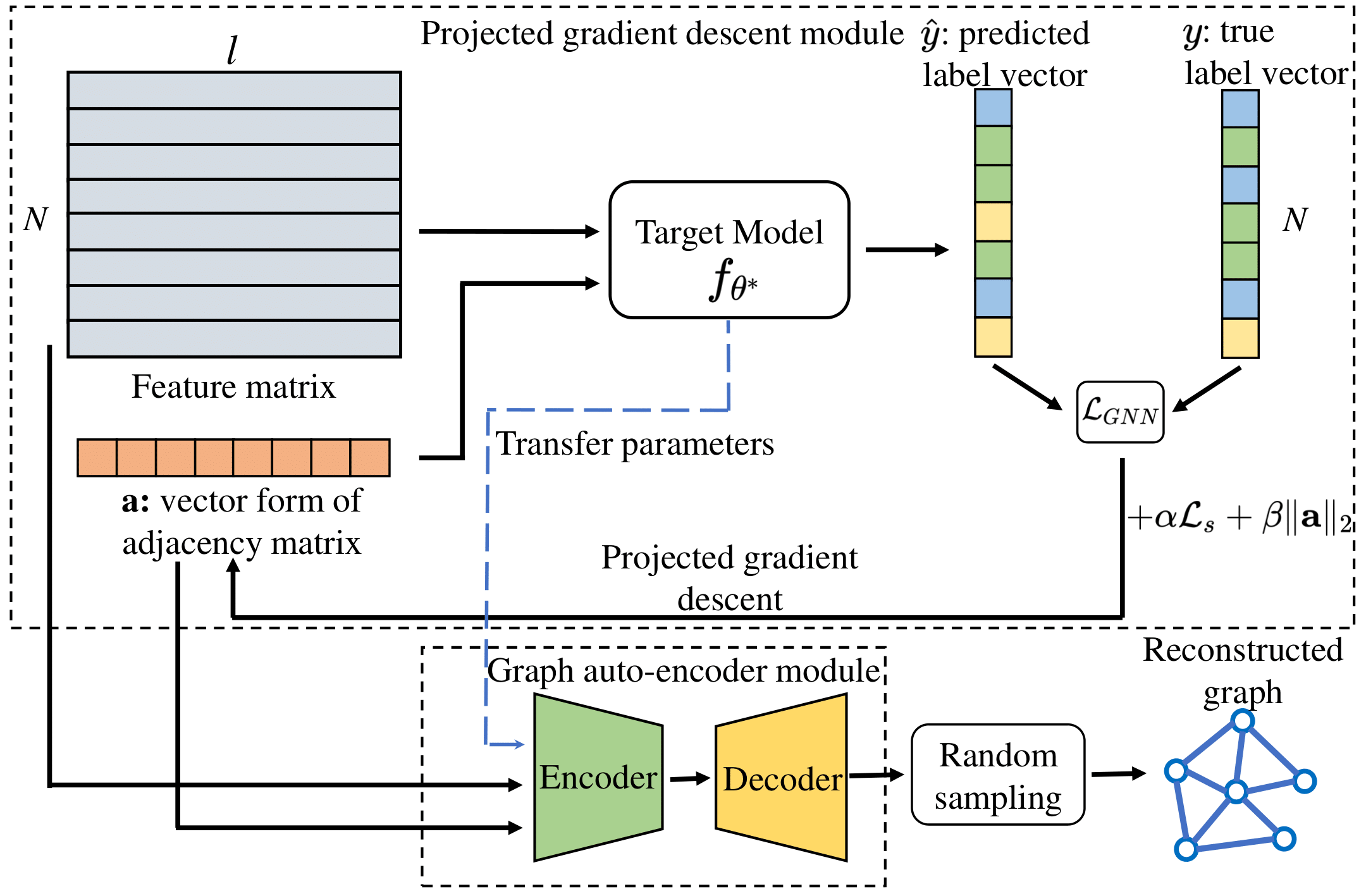}
	\caption{Illustration of the proposed GraphMI. Generally, GraphMI consists of three modules: projected gradient descent module, graph auto-encoder module and random sampling module. Firstly, the projected gradient descent module gets the initial reconstructed adjacency matrix through projected gradient descent. Then, the graph auto-encoder module post-process the reconstructed adjacency matrix. Finally, we can sample a discrete adjacency matrix from the edge probability matrix.}
	\label{illustration}
\end{figure}
We refer to the trained model subjected to model inversion attack as the target model. In this paper, we will firstly train a GNN for node classification task from scratch as the target model. We assume a threat model similar to the existing model inversion attacks \cite{fredrikson2015model}.

\subsubsection{Attacker's Knowledge and Capability}
In this paper, we investigate model inversion attacks against GNN in both white-box and hard-label black-box settings. In the white-box setting, the attacker is assumed to have full access to the target model $f_\theta$ including the model architecture and all the parameters $\theta$. In addition to the target model $f_\theta$, the attacker may have other auxiliary knowledge to facilitate model inversion such as node labels, node attributes, node IDs or edge density. We will discuss the impact of auxiliary knowledge and the number of node labels on attack performance in the following sections. In the hard-label black-box setting, the attacker is only allowed to query the target GNN model $f_\theta$ with an input graph and obtain the predicted labels.
\subsubsection{Objective of Attack}
Assuming the target GNN model $f_\theta$ is first trained on $G=\{A,X\}$, the objective of attacker $\mathcal{A}$ is:
\begin{align}
    \textup{max } s(\widetilde{A}, A), s.t.~\widetilde{A}=\mathcal{A}(X,Y,f_\theta),
\end{align}
where $Y$ is the vector of node labels and $s(\cdot,\cdot)$ is the similarity function to measure the similarity of adjacency matrices e.g. Weisfeiler-Lehman kernel \cite{shervashidze2011weisfeiler}. $\widetilde{A}$ is the reconstructed adjacency matrix by the attacker $\mathcal{A}$.

\begin{algorithm}[tb]
\caption{GraphMI}
\label{code}
\textbf{Input}: Target GNN model $f_{\theta}$;
Node label vector $Y$;
Node feature matrix $X$;
Learning rate, $\eta_t$;
Iterations $T$;\\
\textbf{Output}: Reconstructed $A$
\begin{algorithmic}[1] 
\STATE $\mathbf{a}^{(0)}$ is set to zeros
\STATE Let $t=0$
\WHILE{t \textless T}
\STATE Gradient descent: $\mathbf{a}^{(t)}= \mathbf{a}^{(t-1)}- \eta_t \nabla \mathcal{L}_{attack}(\mathbf{a})$;
\STATE Call Projection operation in (\ref{projection})
\ENDWHILE
\STATE Call Graph auto-encoder module in (\ref{GAE})
\STATE Call Random sampling module in Algorithm 2.
\STATE \textbf{return} $A$
\end{algorithmic}
\end{algorithm}

\section{White-Box Attack: GraphMI}
Next we introduce GraphMI, our proposed white-box model inversion attack against GNN models. The target model $f_\theta$ is firstly trained to minimize the loss function $\mathcal{L}(X, Y, A, \theta)$:
\begin{equation}
    \theta^*={\textup{arg }}\mathop{\textup{min}}\limits_{\theta}\mathcal{L}(X, Y, A, \theta).
\end{equation}
Without ambiguity, we still use $\theta$ as the trained model parameters in the rest of this paper. Then, given the trained model and its parameters, GraphMI aims to find the adjacency matrix $\widetilde{A}$ that maximizes the posterior possibility:
\begin{equation}
    \widetilde{A}={\textup{arg }}\mathop{\textup{max}}\limits_{A}P(A | X, Y, f_\theta).
\end{equation}
\subsection{Attack Overview}
Figure \ref{illustration} shows the overview of GraphMI. Generally, GraphMI is one optimization-based attack method, which firstly employs projected gradient descent on the graph to find the ``optimal'' network topology for node labels. Then the adjacency matrix and feature matrix will be sent to the graph auto-encoder module of which parameters are transferred from the target model. Finally, we can interpret the optimized graph as the edge probability matrix and sample a binary adjacency matrix. We summarize GraphMI in Algorithm \ref{code}.
\subsection{Details of Modules}
\subsubsection{Projected Gradient Descent Module} We treat model inversion on GNNs as one optimization problem: given node features or node IDs, we want to minimize the cross-entropy loss between true labels $y_i$ and predicted labels $\hat{y_i}$ from the target GNN model $f_{\theta}$. The intuition is that the reconstructed adjacency matrix will be similar to the original adjacency matrix if the cross-entropy loss between true labels and predicted labels is minimized. The attack loss on node $i$ is denoted by $\ell_i(A, f_{\theta}, \textbf{x}_i, y_i)$ where $A$ denotes the reconstructed adjacency matrix for brevity, $\theta$ is the model parameter of the target model $f$ and $\textbf{x}_i$ denotes the node feature vector of node $i$. The attack objective function can be formulated as:
\begin{equation}
    \begin{split}
    \mathop{\textup{min}}\limits_{A \in \{0,1\}^{N\times N}} \mathcal{L}_{GNN}(A) &=\frac{1}{N}\sum_{i=1} ^N \ell_i(A, f_{\theta}, \textbf{x}_i, {y_i}) \\s.t.\quad A &= A^\top.
\end{split}
\label{equ1}
\end{equation}
\par
In many real-world graphs, such as social networks, citation networks, and web pages, connected nodes are likely to have similar features \cite{wu2019adversarial}. Based on this observation, we need to ensure the feature smoothness in the optimized graph. The feature smoothness can be captured by the following loss term $\mathcal{L}_s$:
\begin{equation}
    \mathcal{L}_{s} = \frac{1}{2}\sum_{i, j =1} ^N A_{i, j}(\textbf{x}_i-\textbf{x}_j)^2,
\end{equation}
where $A_{i,j}$ indicates the connection between node $v_i$ and $v_j$ in the optimized graph and $(\textbf{x}_i-\textbf{x}_j)^2$ measures the feature difference between $v_i$ and $v_j$. $\mathcal{L}_s$ can also be represented as:
\begin{equation}
    \mathcal{L}_s = tr(X^\top L X),
\end{equation}
where $L=D-A$ is the laplacian matrix of $A$ and $D$ is the diagonal matrix of $A$. In this paper, to make feature smoothness
independent of node degrees, we use the normalized lapacian matrix $\hat{L} = D^{-1/2}LD^{-1/2}$ instead: 
\begin{equation}
    \mathcal{L}_s = tr(X^\top \hat{L} X)=\frac{1}{2}\sum_{i, j =1} ^N A_{i, j}(\frac{\textbf{x}_i}{\sqrt{d_i}}-\frac{\textbf{x}_j}{\sqrt{d_j}})^2,
\end{equation}
where $d_i$ and $d_j$ denote the degree of node $v_i$ and $v_j$. The authors also notice that recent studies \cite{jin2021universal, lim2021large} show that some heterophily graphs do not have the characteristic of feature smoothness. We left dealing with this class of graph for the future work.
To encourage the sparsity of graph structure, $F$ norm of adjacency matrix $A$ is also added to the loss function as one regularization term. The final objective function is:
\begin{equation}
\begin{split}
    {\textup{arg }}\mathop{\textup{min}}\limits_{A \in \{0,1\}^{N\times N}}\mathcal{L}_{attack} &=\mathcal{L}_{GNN}+\alpha \mathcal{L}_s+\beta \|A\|_F
    \\s.t. \quad A &= A^\top,\label{equ3}
\end{split}
\end{equation}
where $\alpha$ and $\beta$ are hyper-parameters that control the weight of feature smoothing and graph sparsity.
Solving equation (\ref{equ3}) is a combinatorial optimization problem due to edge discreteness. For ease of gradient computation and update, we firstly replace the symmetric reconstructed adjacency matrix $A$ with its vector form $\mathbf{a}$ that consists of $n\coloneqq N(N-1)/2$ unique variables in $A$. Adjacency matrix $A$ and vector $\mathbf{a}$ can be converted to each other easily, which ensures the optimized adjacency matrix is symmetric. Then we relax $\mathbf{a} \in \{0,1\}^n$ into convex space $\mathbf{a} \in [0,1]^n$. We can perform model inversion attack by firstly solving the following optimization problem:
\begin{equation}
    {\textup{arg }}\mathop{\textup{min}}\limits_{\mathbf{a} \in [0,1]^n}\mathcal{L}_{attack} =\mathcal{L}_{GNN}+\alpha \mathcal{L}_s+\beta \|\mathbf{a}\|_2.
\label{equ4}
\end{equation}

\par The continuous optimization problem (\ref{equ4}) is solved by projected gradient descent (PGD):
\begin{equation}
    \mathbf{a}^{t+1} = P_{[0,1]} \big[\mathbf{a}^t -\eta_t g_t \big],
    \label{pgd}
\end{equation}
where $t$ is the iteration index of PGD, $\eta_t$ is the learning rate, $g_t$ is the gradients of loss $\mathcal{L}_{attack}$ in \ref{equ3} evaluated at $\textbf{a}^t$, and
\begin{equation}
P_{[0,1]}[x]=\left\{
\begin{array}{rcl}
0 & & x < 0\\
1 & & x > 1\\
x & & otherwise.
\end{array} \right.
\label{projection}
\end{equation}
is the projection operator.
\subsubsection{Graph Auto-encoder Module} In GraphMI, we propose to use graph auto-encoder (GAE) \cite{kipf2016variational} to post-process the optimized adjacency matrix $A$. GAE is composed of two components: encoder and decoder. We transfer part of the parameters from the target model $f_{\theta}$ to the encoder. Specifically, feature matrix and adjacency vector $\mathbf{a}$ are sent to the $f_{\theta}$ and the node embedding matrix $Z$ is generated by taking the penultimate layer of the target model $f_{\theta}$, which is denoted as $H_{\theta}(\mathbf{a}, X)$. Then the decoder will reconstruct adjacency matrix $A$ by applying logistic sigmoid function to the inner product of $Z$:
\begin{equation}
    A={\rm sigmoid}(ZZ^\top),  {\rm with}~ Z=H_{\theta}(\mathbf{a}, X).\label{GAE}
\end{equation}
\par The node embeddings generated by the graph auto-encoder module encode the information from node attributes, graph topology, and the target GNN model. Intuitively, node pairs with close embeddings are more likely to form edges. In experiments, we conduct ablation studies to show the impact of the graph auto-encoder module on the attack performance.
\subsubsection{Random Sampling Module} After the optimization problem is solved, the solution $A$ can be interpreted as a probabilistic matrix, which represents the possibility of each edge. We could use random sampling to recover the binary adjacency matrix. Firstly, we need to calculate the number of sampling edges $\lfloor \rho n \rfloor$ given the estimated graph density $\rho$. The estimated graph density $\rho$ can be obtained from public graphs from the same domain. In the experiments, we will show how the discrepancy between the estimated density and the ground truth density affects the attack performance. Generally, GraphMI is robust to the graph density estimation. Then we take $K$ trials to sample edges based on the probabilistic matrix. Finally, we choose the adjacency matrix with the smallest attack loss $\mathcal{L}_{attack}$. More details are shown in Algorithm \ref{random sampling}. 

\begin{algorithm}[t]
\caption{Random sampling from probabilistic vector to binary adjacency matrix}
\label{random sampling}
\textbf{Input}: Probabilistic vector $\mathbf{a}$, number of trials $K$\\
\textbf{Parameter}: Edge density $\rho$\\
\textbf{Output}: Binary matrix $A$;
\begin{algorithmic}[1] 
\STATE Normalize probabilistic vector: $\hat{\mathbf{a}}$ = $\mathbf{a}/\|\mathbf{a}\|_1$
\FOR{k = 1,2 $\cdots$ K}
\item
Draw binary vector $\mathbf{a}^{(k)}$ by sampling $\lfloor \rho n \rfloor$ edges according to probabilistic vector $\hat{\mathbf{a}}$.
\ENDFOR
\STATE Choose a vector $\mathbf{a}^*$ from $\{\mathbf{a}^{(k)}\}$ which yields the smallest loss $\mathcal{L}_{attack}$. 
\STATE Convert $\mathbf{a}^*$ to binary adjacency matrix $A$ 
\STATE \textbf{return} $A$
\end{algorithmic}
\end{algorithm}

\subsection{Analysis on Correlation between Edge Influence and Inversion Risk}
In previous work \cite{wu2016methodology}, researchers found feature influence to be an essential factor in incurring privacy risk.  In our context of graph model inversion attack, sensitive features are edges. Here we want to characterize the correlation between edge influence and inversion risk. Given label vector $Y$, adjacency matrix $A$ and feature matrix $X$, the performance of target model $f_{\theta}$ for the prediction can be measured by prediction accuracy:
\begin{equation}
    ACC(f_{\theta}, A, X)=\frac{1}{N}\sum_{i=1}^N\mathbf{1}[f_{\theta}^i (A, X)=y_i],
\end{equation}
where, $f_{\theta}^i (A, X)$ is the predicted label for node $i$. The influence of edge $e$ can be defined as:
\begin{equation}
     {\mathcal I}(e)=ACC(f_{\theta}, A, X) - ACC(f_{\theta}, A_{-e}, X),
     \label{edge}
\end{equation} 
where $A_{-e}$ denotes removing the edge $e$ from the adjacency matrix $A$. \cite{wu2016methodology} proposed to use adversary advantage to characterize model inversion risk of features. The model inversion advantage $Adv$ of adversary $\mathcal{A}$ is defined to be $P[\mathcal{A}(X, f_{\theta})=e]- 1/2$, where $P[\mathcal{A}(X, f_{\theta})= e]$ is the probability that adversary $\mathcal{A}$ correctly infer the existence of edge $e$. Next, we introduce our theorem.
\begin{theorem}
$Adv(e_i) \le Adv(e_j),~ \forall e_i, e_j \in \mathcal{E} \cap \mathcal{I}(e_i) \le \mathcal{I}(e_j)$ i.e. The adversary advantage is greater for edges with greater influence.
\label{theorem1}
\end{theorem}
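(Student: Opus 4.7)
The plan is to show that the inversion advantage $Adv(e)$ is monotone in $\mathcal{I}(e)$ by tracing how influence propagates through the attack objective $\mathcal{L}_{attack}$ and then through the projected gradient descent dynamics that actually produce $\widetilde{A}$. Concretely, I would first connect the accuracy-based influence to a loss-based \emph{attack signal}. Let $\Delta \mathcal{L}(e) = \mathcal{L}_{GNN}(A_{-e}) - \mathcal{L}_{GNN}(A)$; this is the penalty incurred when the attacker chooses the wrong state for $e$. Since cross-entropy loss upper-bounds zero-one error (and is bounded below by it up to calibration), I would argue that $\Delta \mathcal{L}(e)$ is monotonically related to $\mathcal{I}(e) = ACC(f_\theta, A, X) - ACC(f_\theta, A_{-e}, X)$: removing a higher-influence edge flips more predicted labels away from the ground truth, which forces a strictly larger cross-entropy gap. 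Hence $\mathcal{I}(e_i) \le \mathcal{I}(e_j)$ implies $\Delta \mathcal{L}(e_i) \le \Delta \mathcal{L}(e_j)$.

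Next I would convert this loss-gap inequality into an advantage inequality. The attacker runs PGD on $\mathbf{a}$ and then rounds through the random sampling module; at any interior iterate the coordinate of $\nabla \mathcal{L}_{attack}$ corresponding to edge $e$ is, by an envelope/sensitivity argument, of order $\Delta \mathcal{L}(e)$. Coordinates corresponding to higher-influence edges therefore receive a larger push toward their ground-truth value in $\{0,1\}$ during PGD, and in turn a larger selection probability during the sampling step of Algorithm~\ref{random sampling}, where each candidate is scored by $\mathcal{L}_{attack}$. The cleanest way to package this is to view the sampler's acceptance rule as a Gibbs-type measure whose log-odds for including $e$ are proportional to $\Delta \mathcal{L}(e)$; monotonicity of $P[\mathcal{A}(X,f_\theta)=e]$ in $\Delta \mathcal{L}(e)$, and thus in $\mathcal{I}(e)$, then yields $Adv(e_i) \le Adv(e_j)$.

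The hard part will be the middle step: turning ``larger gradient implies higher recovery probability'' into a rigorous quantitative statement rather than an intuition. If a Gibbs interpretation is too strong to take for granted, I would state a calibration assumption on $\mathcal{A}$ (already implicit in framing model inversion as MAP/MLE, as done in the introduction following \cite{fredrikson2014privacy}) that its output distribution is a monotonically decreasing function of $\mathcal{L}_{attack}$. Under that assumption the conclusion follows in one line from the loss-gap monotonicity. A secondary subtlety is the mismatch between the discrete accuracy used to define $\mathcal{I}$ and the continuous loss that drives the attack; I would handle this either by restricting to a regime where small changes in soft predictions correspond to consistent changes in hard labels, or by replacing $ACC$ with its continuous confidence surrogate, which removes the mismatch without changing the spirit of the definition.
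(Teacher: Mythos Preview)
Your plan attacks the wrong object. You are trying to prove monotonicity of $Adv$ for the \emph{specific} GraphMI procedure by chasing the influence signal through the cross-entropy loss, the PGD trajectory, and the random sampling module. Each of those links is heuristic at best (you yourself flag the ``larger gradient $\Rightarrow$ higher recovery probability'' step as the hard part and propose to patch it with an extra calibration/Gibbs assumption), and none of them are needed. The paper's argument never touches the algorithm at all.

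The key idea you are missing is that the theorem is stated for the \emph{optimal} adversary, and for that adversary there is a closed-form expression. The paper invokes a lemma from \cite{wu2016methodology}: the optimal advantage for recovering a binary sensitive feature equals $P[f(x_s{=}1,x_{ns})\neq f(x_s{=}0,x_{ns})]$. Specialized to an edge $e$, set $p=P[f_\theta^i(A,X)=y_i]$ and $q=P[f_\theta^i(A_{-e},X)=y_i]$; then $Adv(e)=p(1-q)+q(1-p)=p+q-2pq$. Writing $x=p-q=\mathcal{I}(e)$ and $y=p+q$, one has $Adv=y+\tfrac{x^2-y^2}{2}$ and $\partial Adv/\partial x = x = p-q\ge 0$, since removing an edge cannot increase accuracy. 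Monotonicity in $\mathcal{I}(e)$ is then immediate. No cross-entropy calibration, no PGD dynamics, no sampler analysis. If you want to salvage your route, you would at minimum have to prove (not assume) the Gibbs-type monotonicity of the sampler and a tight two-sided calibration between zero-one and cross-entropy losses; both are substantially harder than the whole proof the paper actually gives.
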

\begin{proof}
Our detailed proof in shown in Appendix A. 
\end{proof}
Intuitively, edges with greater influence are more likely to be recovered by GraphMI because these edges have a greater correlation with the model output. In the following section, we will validate our theorem with experiments.

\section{Black-box Attack}
We illustrate the white-box attack GraphMI in the above section. In this section, we consider the strictest hard-label black-box setting. Specifically, the attacker is only allowed to query the target GNN model $f_{\theta}$ with an input graph and obtain only the predicted hard label (instead of a confidence vector that indicates the probabilities that the graph belongs to each class) for
the input graph. All the other knowledge, e.g., training graphs, structures, and parameters of the target GNN model are unavailable to the attacker. Such a black-box situation brings more challenges to the design of model inversion attacks because the attacker is unable to take the derivative of the adjacency matrix directly. In this section, we propose two methods based on gradient estimation and reinforcement learning.
\subsection{Gradient Estimation}
We first introduce the projected gradient estimation algorithm to solve the problem of black-box attack. With zeroth order oracle, we can estimate the gradient
of $\mathcal{L}_{attack}$ in Equ.\ref{equ4} via computing $(\mathcal{L}_{attack}(a+\mu u)-\mathcal{L}_{attack}(a-\mu u))/2\mu$, where $u$ denotes a normalized direction vector sampled randomly from a uniform distribution over a unit sphere \cite{liu2018zeroth, shamir2017optimal, gao2018information}, and $\mu$ is a step constant. Note that since the attacker is only accessible to the predicted hard labels, $\mathcal{L}_{GNN}$ in $\mathcal{L}_{attack}$ cannot be computed in the black-box setting. We use the error rate $\frac{1}{N}\sum_{i=1}^N\mathbf{1}(f_{\theta}^i (A, X) \neq y_i)$ instead. To ease the noise of gradients, we average the gradients in different directions to estimate the derivative of $\mathcal{L}_{attack}$ as follows:
\begin{equation}
    \hat{\nabla} \mathcal{L}_{attack}(a) = \frac{d}{q} \sum_{j=1}^q \frac{\mathcal{L}_{attack}(a+\mu u_j)-\mathcal{L}_{attack}(a-\mu u_j)}{2\mu} u_j,
\end{equation}
where $q$ is the number of queries to estimate the gradient $\hat{\nabla} \mathcal{L}_{attack}$ and $d$ is the dimension of optimization variables. After computing the estimated gradient, the attacker update the vector of adjacency matrix by  projected gradient descent in Equ.\ref{pgd}. The following
theorem shows the convergence guarantees of our zeroth-order projected gradient descent for the black-box attack.
\begin{assumption}
The loss function $\mathcal{L}_{attack}$ is $L_1$-Lipschitz continuous for a finite positive constant $L_1$.
\end{assumption}
\begin{assumption}
The loss function $\mathcal{L}_{attack}$ is differentiable and has $L_2$-Lipschitz continuous gradients for a finite positive constant $L_2$.
\end{assumption}

\begin{theorem}
\label{convergence therom}
Suppose Assumption 1$\&$2 hold. If we randomly pick $a_r$, whose dimension is $d$, from $\{a_t\}_{t=0}^{T-1}$ with probability $P(r=t) = \frac{2\eta_t-L_2\eta_t^2}{\sum_{t=0}^{T-1}2\eta_t-L_2\eta_t^2}$ Then, we have the following bound on the convergence rate with $\eta_t = \mathcal{O}(\frac{1}{\sqrt{T}})$ and $\mu = \mathcal{O}(\frac{1}{\sqrt{dq}})$: 
\begin{equation}
    \mathbb{E} [\|P_{[0,1]}(a_r, \hat{\nabla} \mathcal{L}_{attack}(a_r), \eta_r)\|^2] = \mathcal{O}(\frac{1}{\sqrt{T}}+\frac{d+q}{q}),
\end{equation}
where $P_{[0,1]}(a_r, \hat{\nabla} \mathcal{L}_{attack}(a_r), \eta_r) \coloneqq (1/\eta_r) [a_r - P_{[0,1]}(a_r - \eta_r \hat{\nabla} \mathcal{L}_{attack}(a_r))$ is the rectified gradient.
\end{theorem}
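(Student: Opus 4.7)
My plan is to follow the standard template for zeroth-order projected gradient descent on non-convex smooth objectives, combining a smoothing argument with the usual descent lemma. First I would recognize that $\hat{\nabla}\mathcal{L}_{attack}(a)$ is an unbiased estimator of the gradient of a ball-smoothed surrogate $\mathcal{L}_\mu(a) := \mathbb{E}_u[\mathcal{L}_{attack}(a+\mu u)]$, with $u$ uniform on the unit sphere. Under Assumption~2, a standard inequality quantifies the bias as $\|\nabla \mathcal{L}_\mu(a) - \nabla \mathcal{L}_{attack}(a)\| = O(L_2 \mu \sqrt{d})$, while under Assumption~1 each single-direction estimator has second moment $O(d L_1^2)$; averaging $q$ independent directions then yields the estimator-error bound $\mathbb{E}\|\hat{\nabla}\mathcal{L}_{attack}(a) - \nabla \mathcal{L}_{attack}(a)\|^2 = O(L_2^2 \mu^2 d) + O(d L_1^2/q)$.

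Next I would apply the smoothness-based descent lemma. Writing $G_t := (1/\eta_t)\bigl[a_t - P_{[0,1]}(a_t - \eta_t \hat{\nabla}\mathcal{L}_{attack}(a_t))\bigr]$ so that $a_{t+1} = a_t - \eta_t G_t$, Assumption~2 yields
\begin{equation*}
\mathcal{L}_{attack}(a_{t+1}) \leq \mathcal{L}_{attack}(a_t) - \eta_t \langle \nabla \mathcal{L}_{attack}(a_t), G_t\rangle + \tfrac{L_2 \eta_t^2}{2}\|G_t\|^2.
\end{equation*}
The variational characterization of the projection onto $[0,1]^n$ gives $\langle \hat{\nabla}\mathcal{L}_{attack}(a_t), G_t\rangle \geq \|G_t\|^2$. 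Decomposing $\hat{\nabla}\mathcal{L}_{attack}(a_t) = \nabla \mathcal{L}_{attack}(a_t) + \delta_t$ and applying Young's inequality to the cross term $\langle \delta_t, G_t\rangle$, I would obtain a per-step inequality
\begin{equation*}
(2\eta_t - L_2 \eta_t^2)\,\mathbb{E}\|G_t\|^2 \leq 2\bigl(\mathbb{E}\mathcal{L}_{attack}(a_t) - \mathbb{E}\mathcal{L}_{attack}(a_{t+1})\bigr) + C\,\eta_t\, \mathbb{E}\|\delta_t\|^2,
\end{equation*}
with $C$ depending only on $L_2$.

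Telescoping over $t = 0, \ldots, T-1$ collapses the function-value terms to $2(\mathcal{L}_{attack}(a_0) - \mathcal{L}^*)$, and dividing by $\sum_t (2\eta_t - L_2 \eta_t^2)$ identifies the resulting weighted average with $\mathbb{E}\|G_r\|^2$ for $r$ drawn from exactly the probability $P(r=t)$ stated in the theorem; this is the reason for the particular weighting. Choosing $\eta_t = \Theta(1/\sqrt{T})$ makes $\sum_t(2\eta_t - L_2\eta_t^2) = \Theta(\sqrt{T})$, so the leading telescoped contribution is $O(1/\sqrt{T})$. Choosing $\mu = \Theta(1/\sqrt{dq})$ makes the bias piece $L_2^2 \mu^2 d$ of order $O(1/q)$, while the variance piece is $O(d/q)$, and together these total $O((d+q)/q)$, yielding the claimed rate.

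The step I expect to be the main obstacle is the coupling between the projection operator and the biased, noisy estimator: the smoothness descent inequality naturally contains $\langle \nabla \mathcal{L}_{attack}(a_t), G_t\rangle$, whereas the variational property of the projection only lower-bounds $\langle \hat{\nabla}\mathcal{L}_{attack}(a_t), G_t\rangle$, and bridging these without losing a factor of $d$ on the wrong side requires Young-inequality weights carefully tuned to remain compatible with the coefficient $2\eta_t - L_2\eta_t^2$ that shows up in $P(r=t)$. A secondary concern is verifying Assumption~2 for the hard-label-based $\mathcal{L}_{attack}$, since the error-rate term is discontinuous; the theorem should therefore be read as applying to a Lipschitz-smoothed version of the attack loss, with the constants $L_1$, $L_2$ interpreted accordingly.
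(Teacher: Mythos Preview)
Your proposal is correct and follows essentially the same template as the paper's proof: both rely on the smoothing identity $\mathbb{E}[\hat\nabla\mathcal{L}_{attack}]=\nabla\mathcal{L}_\mu$, the $L_2$-smooth descent lemma, the variational inequality for the projection, a Young-type splitting, and telescoping with the weights $2\eta_t-L_2\eta_t^2$ that produce the sampling distribution $P(r=t)$. The only noteworthy difference is organizational: the paper applies the descent lemma to the smoothed surrogate $\mathcal{L}_\mu$ (invoking Proposition~1 and Lemma~1 of Liu et al.\ 2018 directly) and then crudely bounds the variance term by the full second moment $\mathbb{E}\|\hat\nabla\mathcal{L}_{attack}\|^2\le (c_1d+4q)L_1^2/(4q)$, which is exactly where the $(d+q)/q$ appears; you instead apply the descent lemma to $\mathcal{L}_{attack}$ and carry a separate bias term $O(L_2^2\mu^2 d)$, which with $\mu=\Theta(1/\sqrt{dq})$ is $O(1/q)$ and is in fact slightly sharper. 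Either route yields the stated rate, and your remark that Assumption~2 is really an assumption on a smoothed version of the hard-label error rate is a valid caveat the paper leaves implicit.
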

\begin{proof}
We defer the proof to the Appendix B.
\end{proof}
\subsection{RL-GraphMI}
Deep reinforcement learning (DRL) offers a promising approach for realizing black-box graph model inversion attacks. Firstly, the addition of edges between nodes can be naturally modeled by actions in a reinforcement learning model. Secondly, the non-linear mapping between the graph structure and its low-dimensional representation can be well captured by a deep neural network.\par
The key idea behind RL-GraphMI is to use a deep reinforcement learning agent to iteratively perform actions aimed at reconstructing the edges of the training graph. Formally, We model the reconstruction of edges by a Finite Horizon Markov Decision Process $(S, A, P, R, \gamma)$ where $S$ denotes the set of states, $A$ the set of actions, $P$ the matrix of state transition probabilities,
$R$ is the reward function, and $\gamma < 1$ is a discounting factor that discounts delayed reward relative to immediate reward.
\subsubsection{States and Actions}
The state $s_t$ contains the intermediate reconstructed graph $G_t$ and the node labels ${y_i}$. We use a two layer graph convolutional network and a two layer neural network to encode $G_t$ and ${y_i}$ respectively. The embedding of the node $e(v)$ is obtained by concatenating the structure embedding from $G_t$ and label embedding from $y_i$. The embedding of $s_t$ is obtained by averaging all the node embeddings.\par
A single action at time step $t$ is $a_t \in V \times V$.  However, simply performing actions in space $\mathcal{O}(|V|^2)$ is too expensive. Following \cite{dai2018adversarial} and \cite{sun2020non}, we adopt a hierarchical decomposition of actions to reduce the action space. At time $t$, the DRL agent first
performs an action $a_t^{(1)}$ to select the first node and then pick the second node using $a_t^{(2)}$. After $a_t^{(1)}$ and $a_t^{(2)}$, a new edge is added between the selected two nodes if there was no edge. With the hierarchy action $a_t= (a_t^{(1)}, a_t^{(2)})$, the trajectory of the proposed MDP is $(s_0, a_0^{(1)}, a_0^{(2)}, r_0, s_1, \cdots, s_{T-1}, a_{T-1}^{(1)}, a_{T-1}^{(2)}, s_T, r_T)$.
\subsubsection{Policy Network}
After \cite{dai2018adversarial} and \cite{sun2020non}, we use Q-learning to find a policy that is optimal in the sense that it maximizes the expected value of the total reward over any and all successive steps, starting from the current state. Q-learning is an off-policy optimization which aims to satisfy the the Bellman optimality equation as follows:
\begin{equation}
    Q^*(s_t, a_t) = r(s_t, a_t)+\gamma \mathop{\textup{max}}\limits_{a_t'} Q^*(s_{t+1}, a'_t).
\end{equation}
The greedy policy to select the action $a_t$ with respect to $Q^*$:
\begin{equation}
    a_t = \pi (s_t) = \mathop{\textup{arg max}}\limits_{a_t} Q^*(s_t, a_t).
\end{equation}
In experiments, We adopt a hierarchical Q function $Q=\{Q^{(1)}, Q^{(2)}\}$ for modeling the two actions:
\begin{equation}
    Q^{(1)}(s_t, a_t^{(1)}; \phi^{(1)}) = W_1^{(1)}\sigma(W_2^{(1)}[e(v_{a_t^{(1)}})\| e(s_t)]),
    \label{Q1}
\end{equation}
where $\phi^{(1)} = \{W_1^{(1)}, W_2^{(1)}\}$ denotes the trainable weights and $\|$ is the concatenation operation. $e(s_t)$ is the state embedding and $e(v_{a_t^{(1)}})$ is the node embedding of first selected node. 
With the first action $a_t^{(1)}$ selected, the DRL agent picks the second action $a_t^{(2)}$ based on $Q^{(2)}$ as follows:
\begin{equation}
    Q^{(2)}(s_t, a_t^{(1)}, a_t^{(2)}; \phi^{(2)}) = W_1^{(2)}\sigma(W_2^{(2)}[e(v_{a_t^{(2)}})\|e(v_{a_t^{(1)}})\| e(s_t)]),
    \label{Q2}
\end{equation}
where $\phi^{(2)} = \{W_1^{(2)}, W_2^{(2)}\}$ denotes the trainable weights. The action value function $Q^{(2)}$ scores the candidate nodes for establishing an
edge based on the state $s_t$, and $a_t^{(1)}$.
\subsubsection{Reward Function}
We design the reward function $r(s_t, a_t^{(1)}, a_t^{(2)})$ based on accuracy $ACC_t = \frac{1}{N}\sum_{i=1}^N\mathbf{1}[f_{\theta}^i (A_t, X)=y_i)]$ where $A_t$ is the adjacency matrix of $G_t$. However, direct use of $ACC_t$ as the reward could slow down training since the accuracy might not differ significantly between two consecutive states. Hence, we design a guiding binary reward $r_t$ to be 1 if the action $a_t = (a_t^{(1)}, a_t^{(2)})$ can increase the accuracy at time t, and to be -1 otherwise:
\begin{equation}
    r(s_t, a_t^{(1)}, a_t^{(2)}) = \left\{
\begin{array}{rcl}
1 & & ACC_t \ge ACC_{t-1}\\
-1 & & otherwise.
\end{array} \right.
\label{reward}
\end{equation}
\subsubsection{Terminal}
In RL-GraphMI, the DRL agent stops to take further actions once the number of edges reaches the predefined number which can be estimated by the graph density. 
\begin{algorithm}[t]
\caption{The training algorithm of RL-GraphMI}
\label{algo:RL}

\textbf{Input}:Target GNN model $f_{\theta}$;
Node label vector $Y$;
Node feature matrix $X$;
Learning rate, $\eta_t$;
Training Iterations $T$;
Maximum Edges $\Delta$;
Update Period $C$\\ 
\textbf{Output}:Reconstructed A
\begin{algorithmic}[1]
\STATE Initialize action-value function Q with random parameters $\phi$
\STATE Set target function $\hat{Q}$ with parameters $\phi^{-}=\phi$
\STATE Initialize replay memory buffer $\mathcal{M}$

\WHILE{episode \textless T}
\STATE Initialize edge set of the adjacency matrix $A$
\WHILE{t \textless $\Delta$}
\STATE Compute state representation
\STATE  With probability $\epsilon$ select a random action $a_t^{(1)}$, otherwise select $a_t^{(1)}$ based on  Eq.(\ref{Q1})
\STATE	With probability $\epsilon$ select a random action $a_t^{(2)}$, otherwise select $a_t^{(2)}$ based on  Eq.(\ref{Q2})
\STATE  Compute $r_t$ according to Eq.(\ref{reward})
\STATE  Set $s_{t+1}=\{s_t, a_t^{(1)}, a_t^{(2)}\}$
\STATE  $E_A \leftarrow E_A \cup (a_t^{(1)}, a_t^{(2)})$
\STATE   Store $\{s_t, a_t^{(1)}, a_t^{(2)}, r_t,s_{t+1}\}$ in memory $\mathcal{M}$
\STATE   Sample minibatch transition randomly from $\mathcal{M}$
\STATE Update parameter according to Eq.(\ref{eq:loss})
\STATE  Every C steps $\phi^{-}=\phi$;
\ENDWHILE
\ENDWHILE
\STATE \textbf{return} $A$

\end{algorithmic}
\end{algorithm}
\subsubsection{Training Algorithm}
To train the RL-GraphMI, we adopt
the experience replay technique with memory buffer $\mathcal{M}$ \cite{mnih2015human}. The trainable parameters in RL-GraphI is denoted as $\phi$. The key idea behind experience replay is to randomizes the order of
data used by Q learning, so as to remove the correlations in the observation sequence. We simulate action selection and store the resulting data in a memory buffer $\mathcal{M}$. During training, a batch of experience $(s, a, s')$ where $a = \{a^{(1)}, a^{(2)}\}$ is drawn uniformly from the stored memory buffer $\mathcal{M}$. The Q-learning loss function is similar to \cite{mnih2015human} as:
\begin{equation}\label{eq:loss}
\mathbb{E}_{(s,a,s')\sim \mathcal{M}}[(r+\gamma \max_{a'} \hat{Q}(s',a'|\phi^{-}) - Q(s,a|\phi))^2],
\end{equation}
where $\hat{Q}$ represents the target action-value function and its parameters $\phi^{-}$ are updated with $\phi$ every C steps.
To improve the stability of the algorithm, we clip the error term between $-1$ and $+1$. The agent adopts $\epsilon$-greedy policy that select a random action with probability $\epsilon$. The overall training framework is summarized in Algorithm \ref{algo:RL}.
\subsection{Computational Complexity}
In GraphMI, we take derivatives with respect to the adjacency matrix $A$. The computational complexity of GraphMI is $N^2T$ where $N$ is the number of nodes, and $T$ is the total iterations. The computational overhead of gradient estimation and RL-GraphMI are $N^2qT$ and $N^2\Delta T$ respectively, where $q$ is the number of queries and $\Delta$ is the number of maximum edges.
\begin{table}[h]
\centering
\caption{Dataset statistics}
\begin{tabular}{ccccc}
\toprule
 Datasets        & Nodes & Edges & Classes & Features \\ \midrule
Cora     & 2,708  & 5,429  & 7       & 1,433     \\ 
Citeseer & 3,327  & 4,732  & 6       & 3,703     \\ 
Polblogs & 1,490  & 19,025  & 2       & -     \\ 
USA & 1,190  & 13,599  & 4       & -     \\
Brazil & 131  & 1,038  & 4       & -     \\
AIDS & 31,385  & 64,780  & 38       & 4     \\
ENZYMES & 19,580  & 74,564  & 3       & 18     \\
\bottomrule
\end{tabular}

\label{data}
\end{table}
\section{Attack Results}
\begin{table*}[!t]
\scriptsize
\caption{Results of model inversion attack on GCN ($\%$). We repeat 5 times and report the means and standard deviations. The best result for each dataset are bolded}
\centering

\begin{tabular}{lcccccccccccc}
\toprule
                \multirow{2}{*}{\textbf{Method}}& \multicolumn{2}{c}{\textbf{Cora}}                         & \multicolumn{2}{c}{\textbf{Citeseer}}                     & \multicolumn{2}{c}{\textbf{Polblogs}}& \multicolumn{2}{c}{\textbf{USA}} &\multicolumn{2}{c}{\textbf{Brazil}} &\multicolumn{2}{c}{\textbf{AIDS}}  \\ 
                \cmidrule(r){2-3} \cmidrule(r){4-5} \cmidrule(r){6-7} \cmidrule(r){8-9}\cmidrule(r){10-11}\cmidrule(r){12-13}
                         & AUC           & AP            & AUC      & AP            & AUC            & AP      & AUC &AP&AUC&AP&AUC &AP  \\ \midrule
Attr. Sim. & 80.3$\pm$1.3          & 80.8$\pm$1.6          & \textbf{88.9$\pm$1.7}          & \textbf{89.1$\pm$1.8} &-&-&-&-&-&-          & 73.1$\pm$1.8          & 72.7$\pm$2.0   \\ 
Emb. Sim.   & 81.6$\pm$1.1          & 82.2$\pm$1.4          & 87.2$\pm$1.5          & 86.1$\pm$1.4          & 66.9$\pm$1.6          & 67.4$\pm$1.6          & 64.8$\pm$0.8                       & 65.1$\pm$0.7                       & 67.2$\pm$1.4 &70.1$\pm$1.2&74.0$\pm$1.3&72.9$\pm$1.2                            \\ 
MAP   & 74.7$\pm$1.6          & 71.8$\pm$1.5          & 69.3$\pm$0.8          & 75.5$\pm$0.9          & 68.8$\pm$1.1          & 75.1$\pm$1.3          & 59.4$\pm$1.5                       & 61.1$\pm$1.7                       & 63.8$\pm$1.2 &66.1$\pm$1.3&64.2$\pm$1.3&65.3$\pm$1.4                           \\ 
GraphMI     & \textbf{86.8$\pm$1.3} & \textbf{88.3$\pm$1.2} & 87.8$\pm$0.6 & 88.5$\pm$0.5 & \textbf{79.3$\pm$0.6} & \textbf{79.7$\pm$0.8} & \textbf{80.6$\pm$1.4}& \textbf{81.3$\pm$1.6}  & \textbf{86.6$\pm$1.0} &\textbf{88.8$\pm$0.9}&\textbf{80.2$\pm$1.0} &\textbf{80.9$\pm$1.5}                                                   \\ \bottomrule
\vspace{-0.4cm}
\end{tabular}

\label{result}
\end{table*}
In this section, we present the experimental results to show the effectiveness of GraphMI in both white-box and hard-label black-box settings. 

\subsection{Experimental Settings}
\textbf{Datasets: }Our graph model inversion attack method is evaluated on 7 public datasets from 4 categories. The detailed statistics of them are listed in Table \ref{data}.
\begin{itemize}
    \item \textit{Citation Networks}: We use Cora and Citeseer \cite{sen2008collective}. Here, nodes are documents with corresponding bag-of-words features and edges denote citations among nodes. Class labels denote the subfield of research that
    the papers belong to.
    \item \textit{Social Networks}: Polblogs \cite{adamic2005political} is the  network of political blogs whose nodes do not have features. 
    \item \textit{Air-Traffic Networks}:  The air-traffic networks are based on flight records from USA and Brazil. Each node is an airport and
    an edge indicates a commercial airline route
    between airports. Labels denote the level of activity
    in terms of people and flights passing through an
    airport \cite{ribeiro2017struc2vec}.
    \item \textit{Chemical Networks}: AIDS \cite{riesen2008iam} and Enzymes \cite{borgwardt2005protein} are chemical datasets that contain many molecure graphs, each node is an atom and each link represents chemical bonds.
\end{itemize}

\subsubsection{Target Models:}In our evaluation, we use 3 state-of-the-art GNN models: GCN \cite{kipf2016semi}, GAT \cite{velivckovic2017graph} and GraphSAGE \cite{hamilton2017inductive}. 
The parameters of the models are the same as those set in the original papers. To train a target model, 10\% randomly sampled nodes are used as the training set. All GNN models are trained for 200 epochs with an early stopping strategy based on convergence behavior and accuracy on a validation set containing 20\% randomly sampled nodes. In GraphMI attack experiments, attackers have labels of all the nodes and feature vectors. All the experiments are conducted on 1 Tesla V100 GPU.
\subsubsection{Parameter Settings:}In experiments, we set $\alpha=0.001$, $\beta=0.0001$, $\eta_t=0.1$, $K=20$ and $T=100$ as the default setting. We show how to find optimal values for hyper-parameters in the following section. For
other parameters in gradient estimation and RL-GraphMI, we set $q=100$, $\mu=0.01$, $\Delta = \lfloor n\rho_g \rfloor$, and $C=10$ as default ($\rho_g$ is the ground truth graph density). In each experiment, we repeat the attack for 5 times
and report the average attack performance with standard deviations to ease the influence of randomness.

\subsubsection{Metrics:}Since our attack is unsupervised, the attacker cannot find a threshold to make a concrete prediction through the  algorithm. To evaluate our attack, we use AUC (area under the ROC curve) and AP (average precision) as our metrics, which is consistent with previous works \cite{kipf2016variational}. In experiments, we use all the edges from the training graph and the same number of randomly sampled pairs of unconnected nodes (non-edges) to evaluate AUC and AP. The higher AUC and AP values imply better attack performance. An AUC or AP value of 1 implies maximum performance (true-positive rate of 1 with a
false-positive rate of 0) while an AUC  or AP value of 0.5 means performance equivalent to random guessing.\par
For the black-box attack, two another metrics are considered for evaluation: 1) Average Queries (AQ), i.e., the average number of queries used in the whole attack. 2) Average Time (AT), i.e., the average time used in the whole attack. An attack has better attack performance if it has a smaller AQ and AT.
\subsubsection{Baselines:}
There are three baseline methods, Attribute Similarity (abbreviated as Attr. Sim.), Embedding Similarity (abbreviated as Emb. Sim.), and MAP.
\begin{itemize}
    \item \emph{Attribute Similarity} is measured by cosine distance among node attributes, which is commonly used in previous works \cite{he2020stealing}. A higher similarity typically indicates a higher probability of edge existence between two nodes.
    \item \emph{Embedding Similarity} calculates the cosine distance among the embedding of nodes, which is similar to the Attribute Similarity. The node embeddings can be obtained by taking the node representations from the penultimate layer of the target graph neural network.
    \item The model inversion method from \cite{fredrikson2015model} \emph{MAP} is adapted to the graph setting as another baseline.
\end{itemize}
\begin{figure}[t]
	\centering
    \includegraphics[width=0.8\linewidth]{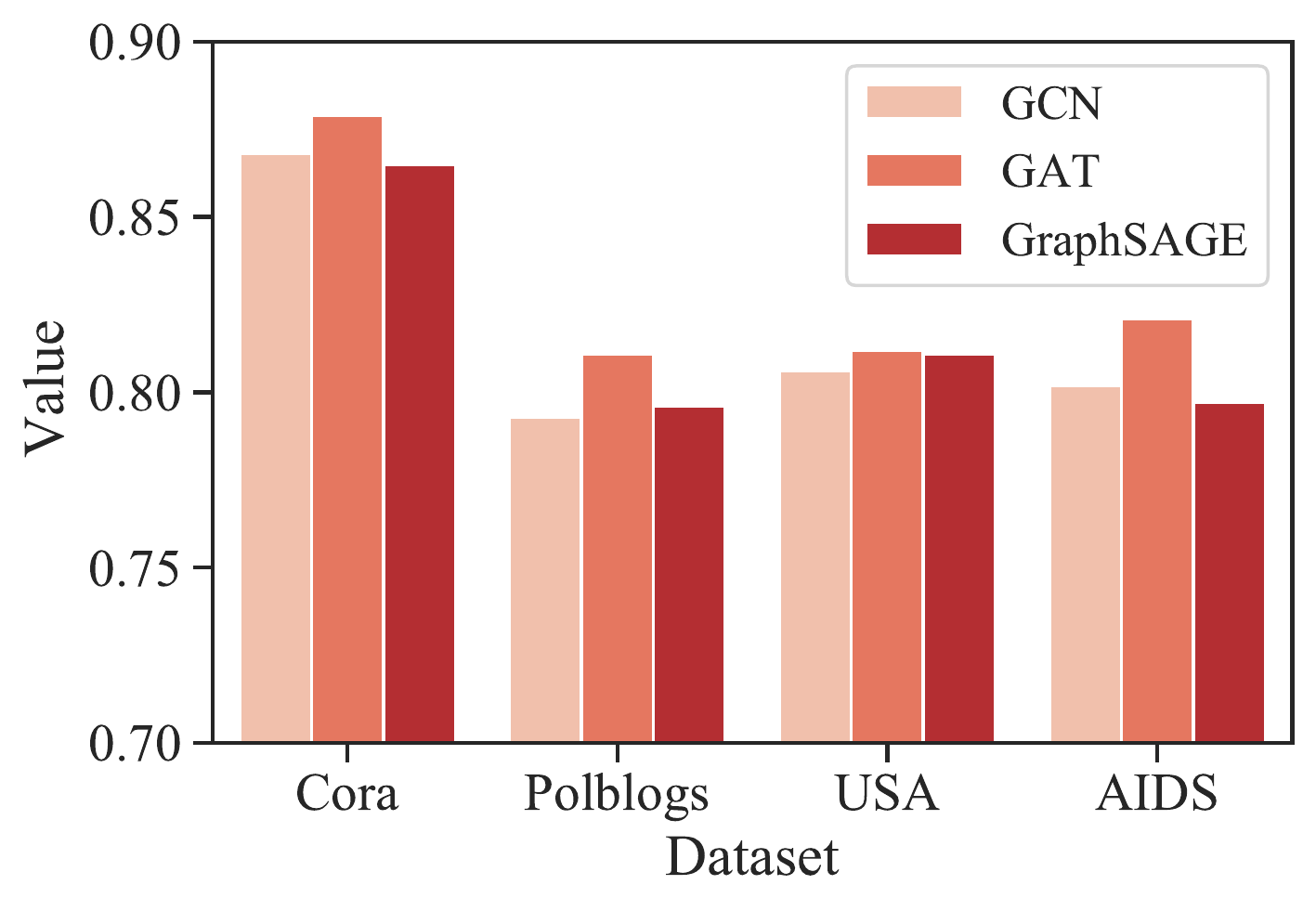}
	\caption{Attack performance of GraphMI on different Graph Neural Networks. GraphMI is agnostic to the architecture of graph neural networks and achieves success on GCN, GAT, and GraphSAGE.}
	\label{gnn type}
\end{figure}
\subsection{Results and Discussions}
\begin{table}[t]
\caption{Compare the reconstructed graph and the original graph with respect to graph isomorphism and macro-level graph statistics ($\%$). We repeat 5 times and report the means and standard deviations.}
\begin{tabular}{cccccc}
\toprule
Datasets                  & Iso. & Degree   & LCC Dist.  & BC Dist. & CC Dist. \\ \midrule
\textbf{Cora}    & 93.7$\pm$1.3      & 93.2$\pm$0.4      & 99.9$\pm$0.1      & 99.9$\pm$0.1       &99.9$\pm$0.1        \\
\textbf{Citeseer} &  94.6$\pm$1.6   & 96.2$\pm$0.8      & 99.9$\pm$0.1      & 99.9$\pm$0.1       & 99.7$\pm$0.2        \\
\textbf{Polblogs}   & 84.8$\pm$1.5   & 81.6$\pm$2.6  & 99.4$\pm$0.2 & 99.9$\pm$0.1       & 75.8$\pm$4.2        \\
\textbf{USA}                     & 86.5$\pm$0.9                         & 88.5$\pm$1.0  & 99.5$\pm$0.1 & 99.8$\pm$0.2       & 86.5$\pm$1.2        \\
\textbf{Brazil}                    & 86.9$\pm$1.1                         & 91.2$\pm$1.3    & 99.7$\pm$0.1  & 99.8$\pm$0.1       & 77.6$\pm$4.5        \\
\textbf{AIDS}            & 87.8$\pm$1.4                         & 84.7$\pm$1.9 & 99.2$\pm$0.3 & 99.6$\pm$0.1      & 80.2$\pm$3.7        \\
\textbf{Enzymes}                  & 82.3$\pm$1.5            & 75.5$\pm$3.4 & 98.8$\pm$0.2 & 98.1$\pm$0.1       & 78.7$\pm$2.9       \\ \bottomrule
\end{tabular}
\label{iso}
\end{table}
\subsubsection{Attack Performance} Results for model inversion attacks on GCN are summarized in table \ref{result}. Note that some datasets such as Polblogs dataset do not have node attributes so we assign one-hot vectors as their attributes. They are not applicable for the attack based on attribute similarity. As can be observed in table \ref{result}, GraphMI achieves the best performance across nearly all the datasets, which demonstrates the effectiveness of GraphMI. For instance, GraphMI achieves an average attack AUC of 86.8$\%$ and an average attack AP of 88.3$\%$ on the Cora dataset, which exceeds the best baseline method by at least 5 percent points. One exception is Citesser where the attack performance of GraphMI is relatively lower than attribute similarity, which could be explained by more abundant node attribute information of Citeseer compared with other datasets. Thus using node attribute similarity alone could achieve good performance on the Citesser dataset.

In figure \ref{gnn type}, we show the attack performance of GraphMI on three GNNs. In general, GraphMI can achieve success on all three GNNs. Specially, we observe that GraphMI has better attack performance on the GAT model. This may be explained by the fact that GAT model is more powerful at node classification \cite{velivckovic2017graph}. Thus, it is able to build a stronger correlation between graph topology and node labels. GraphMI can take advantage of such a stronger correlation and achieve better attack performance. \par 
In figure \ref{edge influence}(a), we present the influence of node label proportion on attack performance. As can be observed from the plot, with more node labels, the attack performance will increase gradually. This is because more node labels offer more information and regularization (Equ. \ref{equ1}), which the attacker can exploit to infer the training graph. Moreover, GraphMI is robust to the number of node labels. For example, GraphMI can still achieve over 80 $\%$ AUC and AP when only 20$\%$ node labels are available.

\begin{figure}[t]
	\centering
	\subfigure[]{\includegraphics[width=0.48\linewidth]{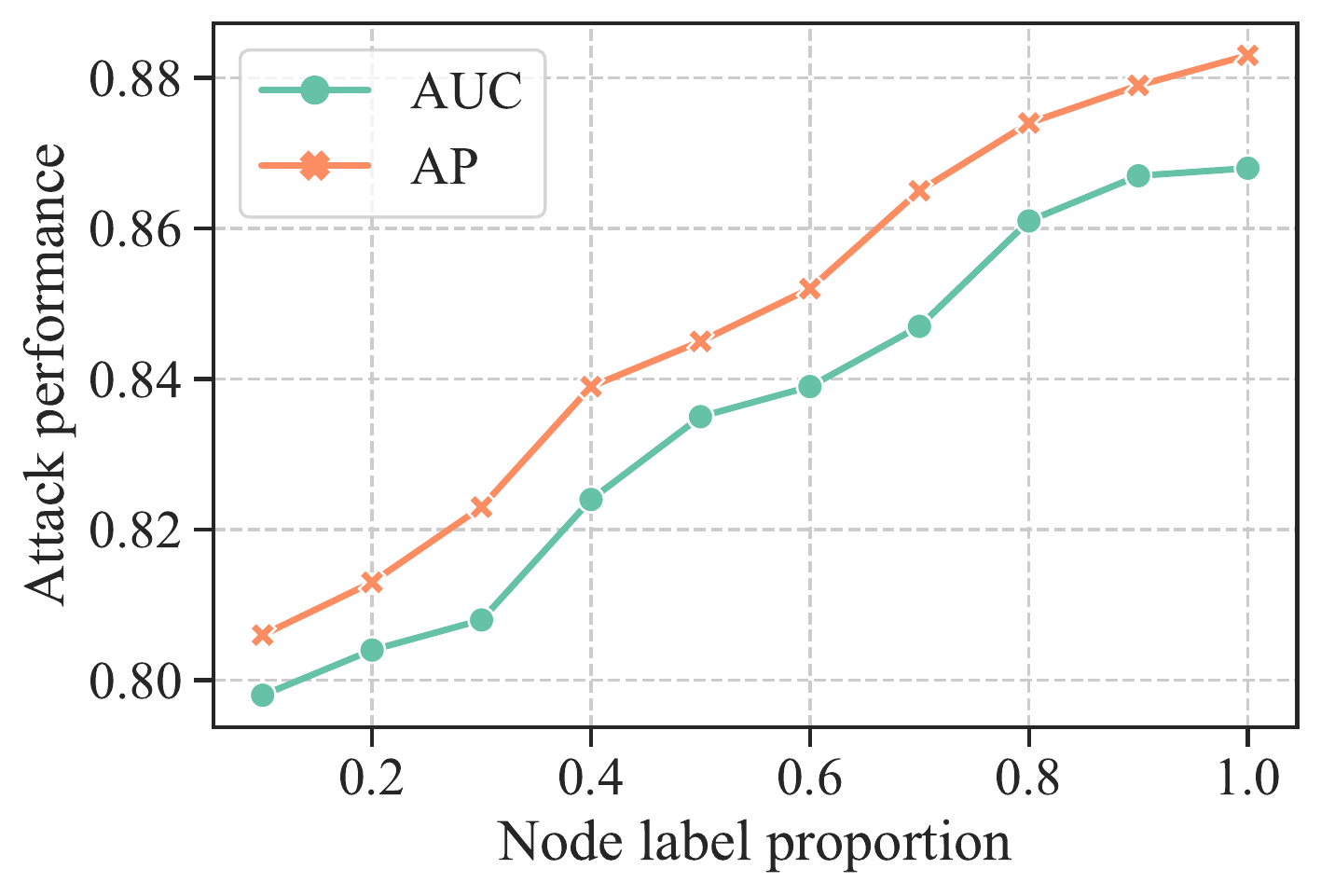}}
	\subfigure[]{\includegraphics[width=0.48\linewidth]{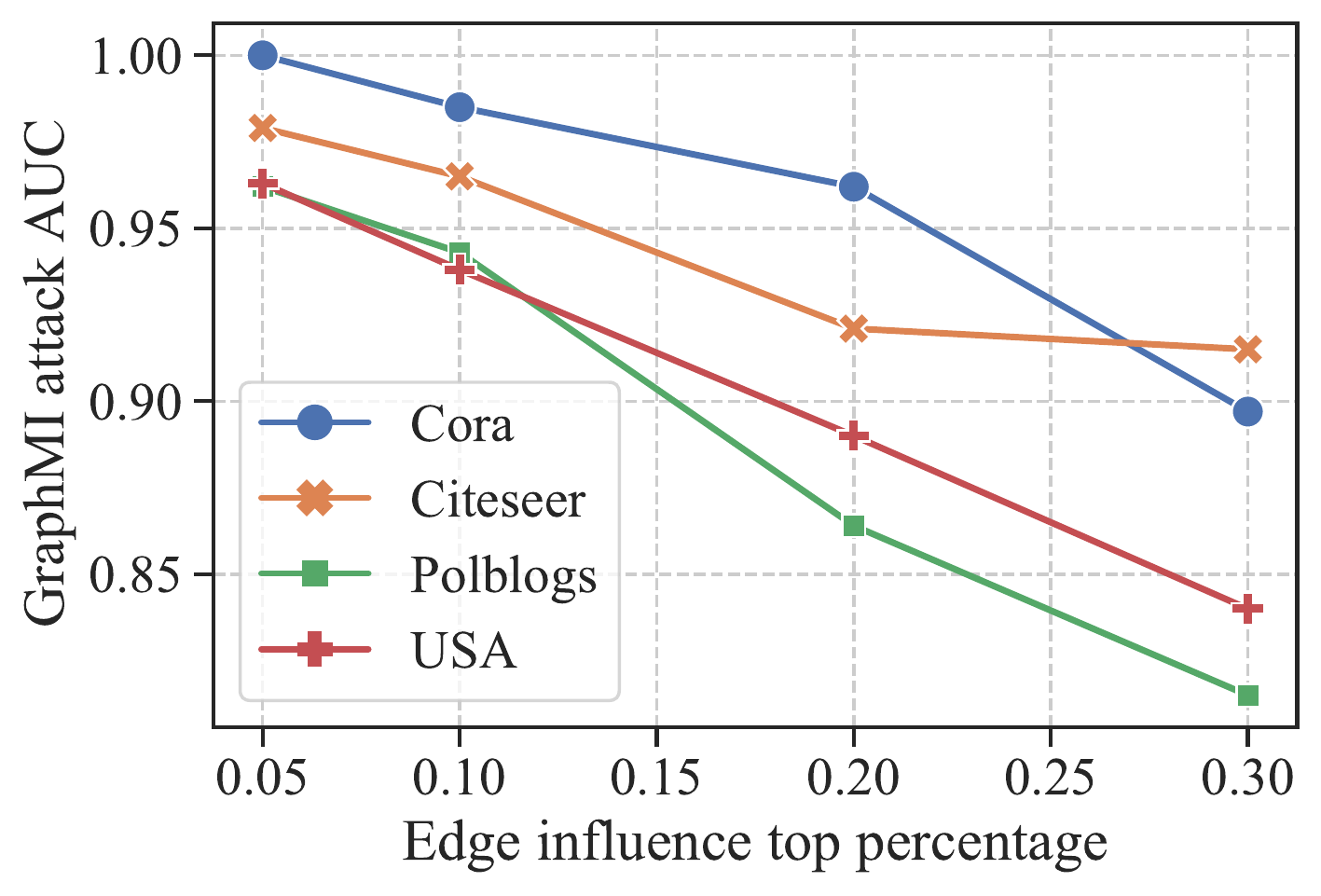}}
	\caption{Impact of node label proportion (a) and edge influence (b) on the Cora dataset.}
	\label{edge influence}
	\vspace{-0.5cm}
\end{figure}

\subsubsection{Graph Isomorphism and Macro-level Graph Statistics}
The aforementioned results show that GraphMI can reconstruct the training graph well at the edge level by measuring AUC or AP. Further, we are interested in how the reconstructed graph resembles the training graph at the graph level. Here we use graph isomorphism and macro-level graph statistics to show the graph-level similarity.\par
\emph{Graph isomorphism} compares
the structure of the reconstructed graph with the original graph and determines their similarity. The graph isomorphism problem is well-known to be intractable in polynomial time; thus, approximate algorithms such as \emph{Weisfeiler-Lehman (WL)} algorithm \cite{shervashidze2011weisfeiler} are widely used for addressing it. We use the open-source implementation in experiments to calculate the Weisfeiler-Lehman kernel\footnote{https://github.com/emanuele/jstsp2015}.\par
We further explore whether the reconstructed graph and the original graph have similar graph statistics by analyzing four widely
used graph statistics: Degree distribution, local clustering
coefficient (LCC), betweenness centrality (BC), and closeness centrality (CC). We refer the readers to Appendix
for detailed descriptions of these statistics. In experiments, we use networkx \footnote{https://github.com/networkx/networkx} for the calculation of LCC, CC, and BC, and bucketize the statistic domain into 10 bins to measure their distributions by cosine similarity. \par
Table \ref{iso} illustrates the attack performance in terms of graph isomorphism and macro-level graph statistics measured by cosine similarity. Note that we need to use the random sampling module to sample discrete adjacency matrices for the calculation of Weisfeiler-Lehman kernel and macro-level graph statistics.
Generally, our attack
achieves strong performance across all the tests. For instance, the WL graph
kernel on Citeseer achieves 94.6$\%$. Besides, the
cosine similarity of the local clustering coefficient and betweenness centrality distribution
is larger than 98.0$\%$ for all the datasets. For degree distribution
and closeness centrality distribution, the attack performance
is slightly worse; however, we can still achieve cosine similarity larger than 75.0$\%$.

\subsubsection{Edge Influence.}We do experiments to verify our claim that edges with greater influence are more likely to be inferred successfully through model inversion attack (Theorem \ref{theorem1}). Note that it will be very time-consuming to measure the influence of each edge exactly. According to Equ. \ref{edge}, removing edges with greater influence will cause a greater drop of prediction accuracy. To select edges with great influence, we apply the state-of-the-art topology attacks \cite{xu2019topology, zugner2019adversarial, li2021adversarial} on graphs by removing edges and average the edge influence ranks. In Figure \ref{edge influence}(b), we show that for edges with top $5\%$ influence GraphMI achieves the attack AUC of nearly 1.00 in Cora dataset. This implies that the privacy leakage will be more severe if sensitive edges are those with greater influence.

\begin{figure}[t]
	\centering
	\subfigure[]{\includegraphics[width=0.48\linewidth]{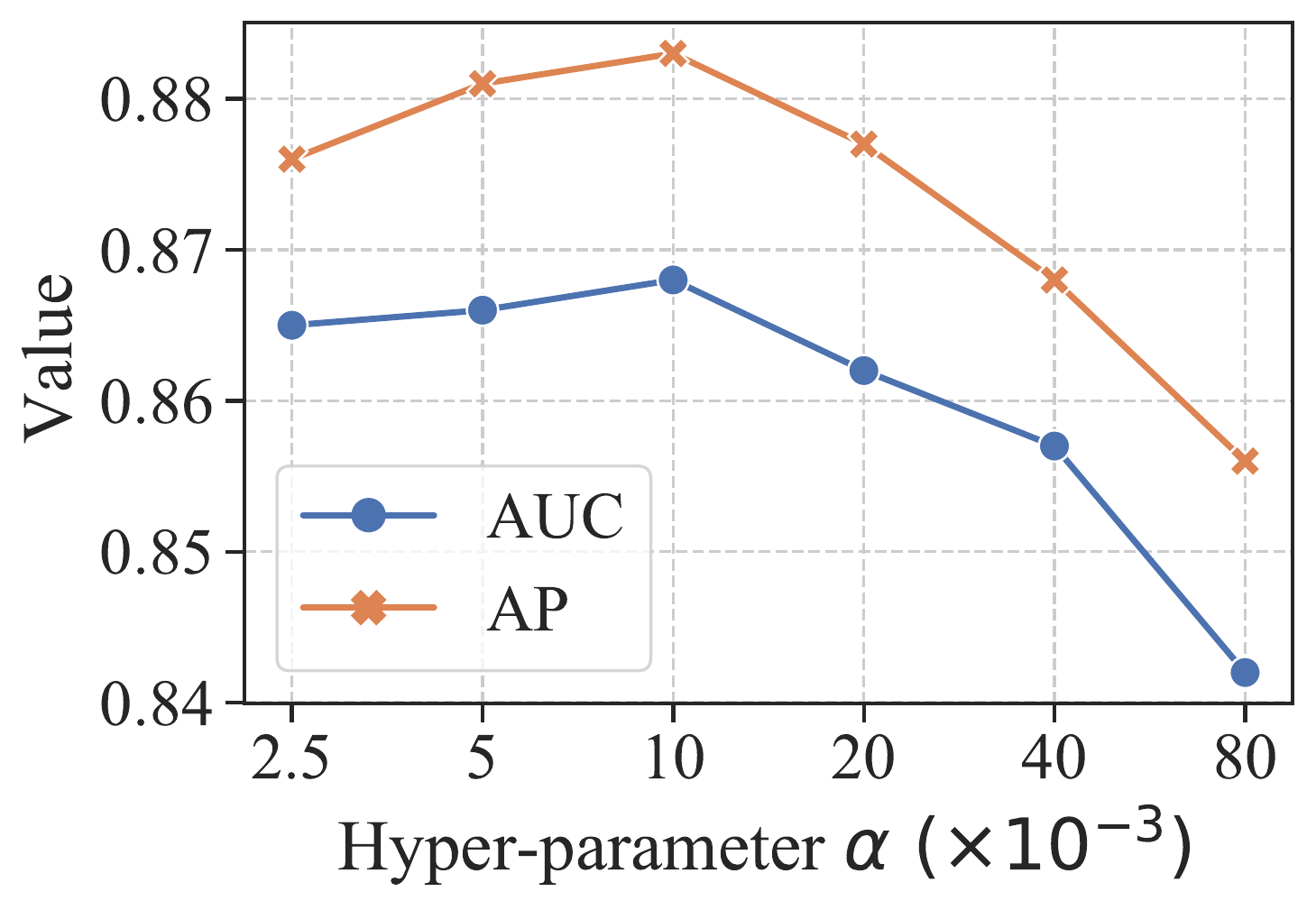}}
    \subfigure[]{\includegraphics[width=0.48\linewidth]{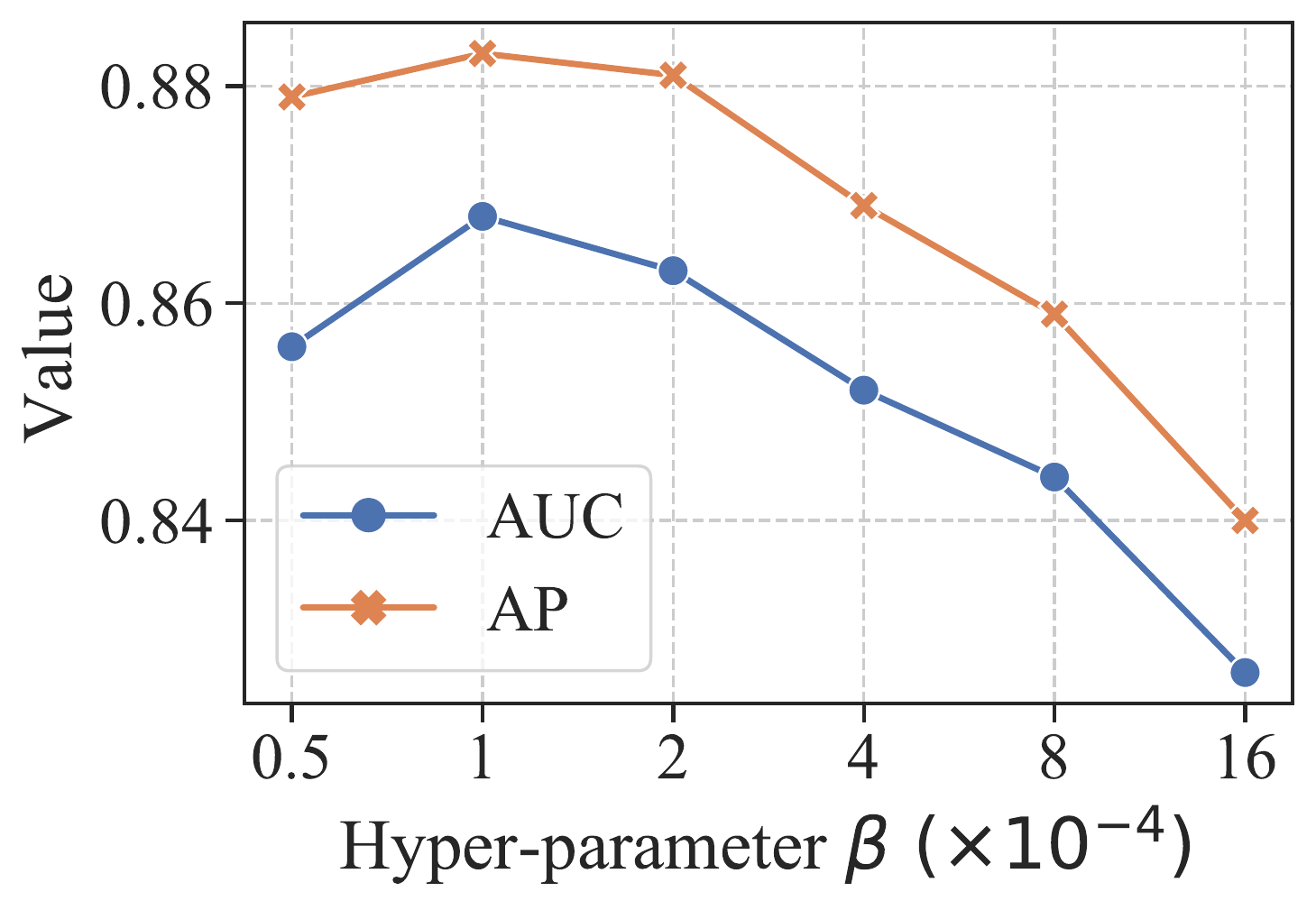}}\\
    \subfigure[]{\includegraphics[width=0.48\linewidth]{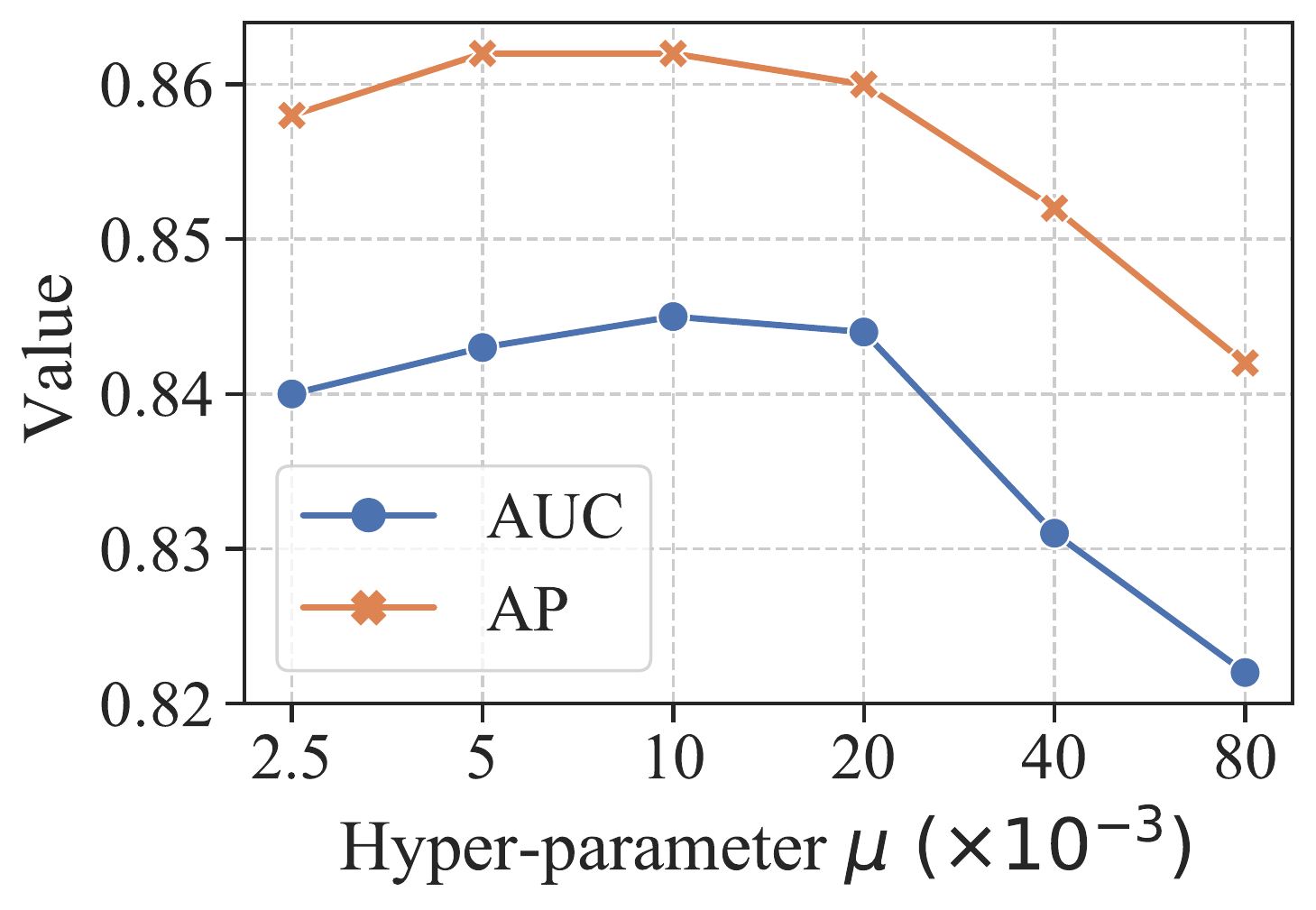}}
    \subfigure[]{\includegraphics[width=0.48\linewidth]{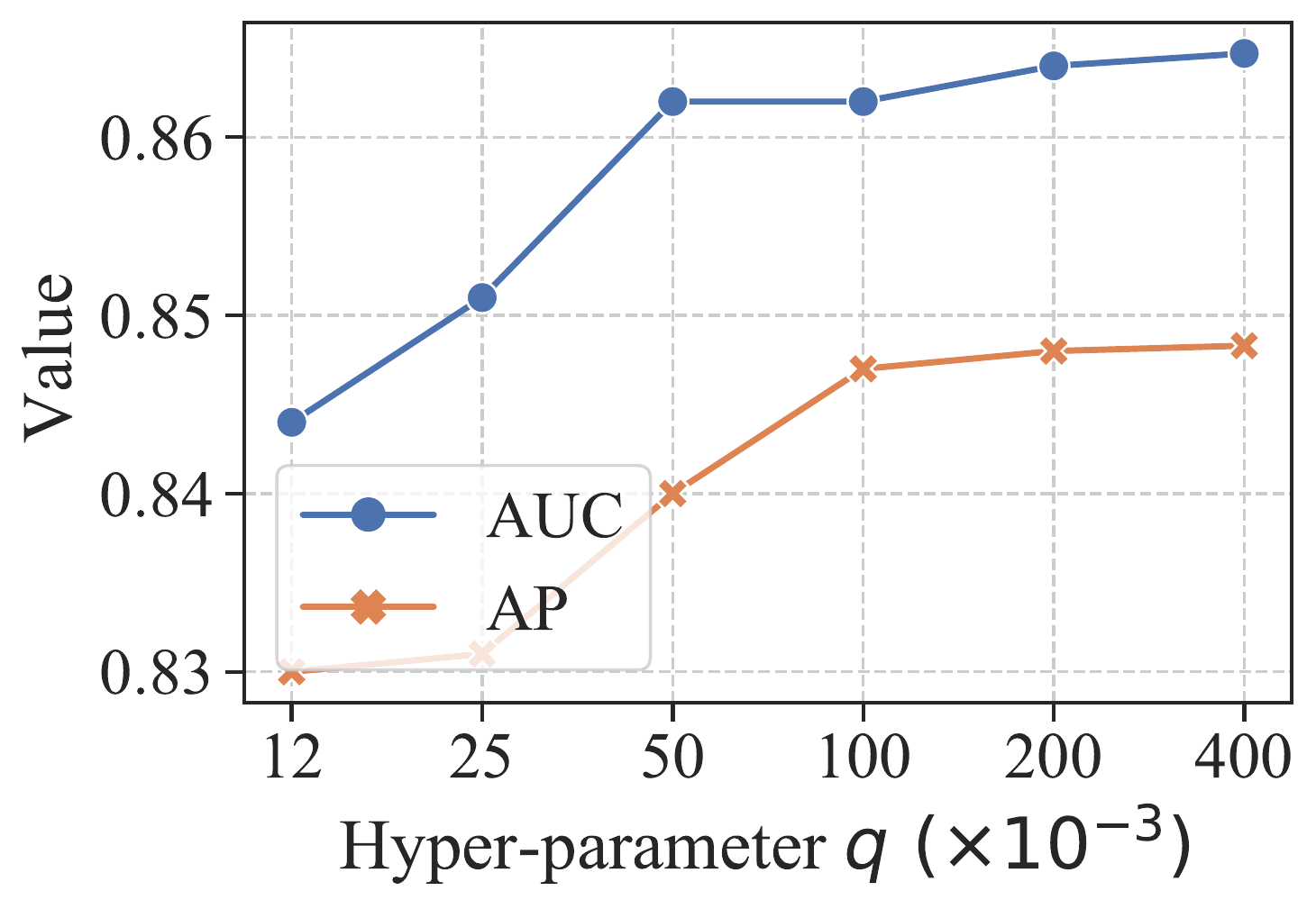}}\\
    \subfigure[]{\includegraphics[width=0.5\linewidth]{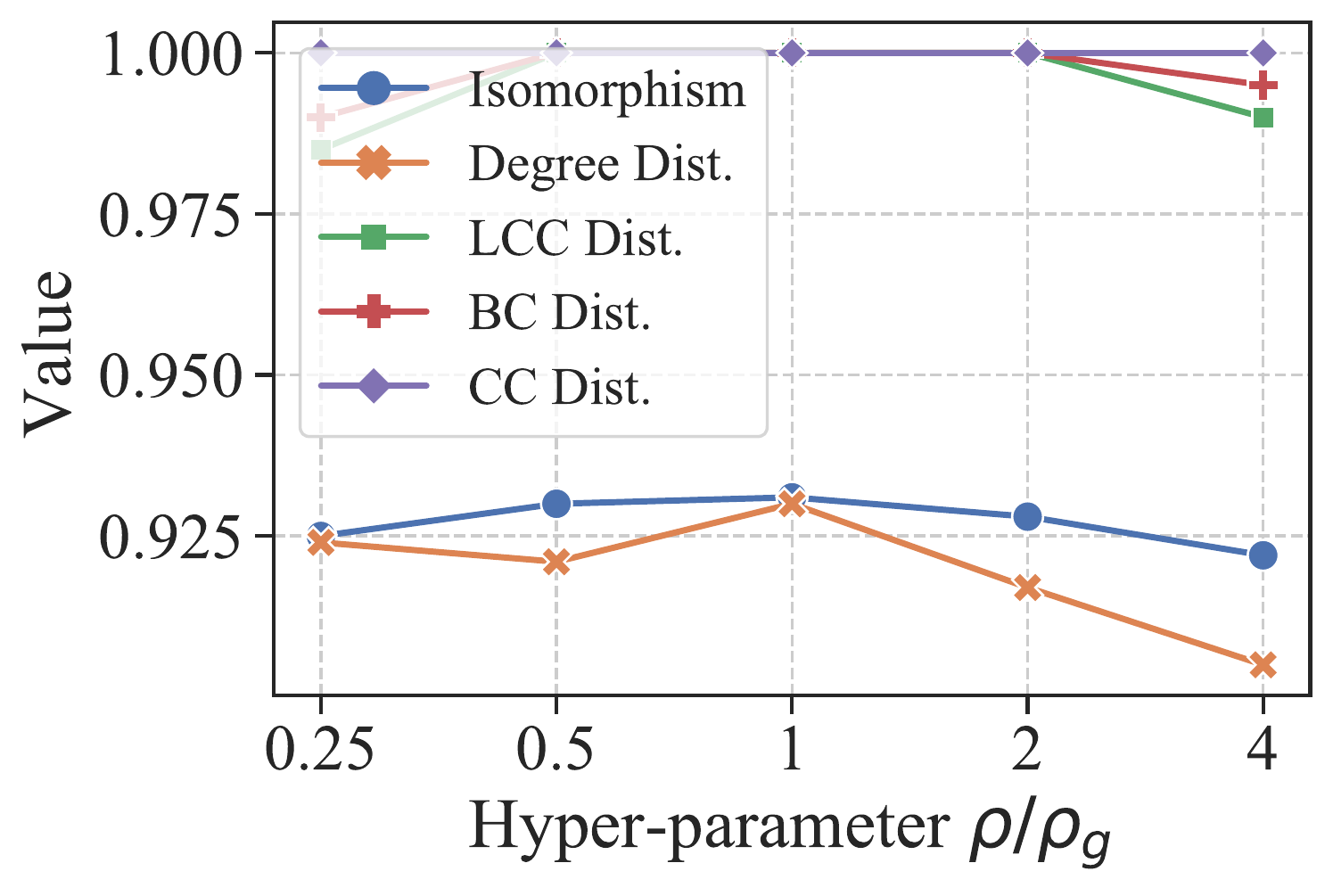}}
	\caption{ Results of hyper-parameter analysis on Cora dataset}
	\label{param analysis}
\end{figure}

\subsubsection{Effectiveness of Black-box attack}
\begin{table}[t]
\centering
  \fontsize{8}{11}\selectfont
  \caption{Performance of Black-box attacks. We report the attack AUC ($\%$), AP ($\%$), AQ, and AT of two black-box attack methods. We also show the results of white-box GraphMI for reference.
  } 
  \label{black box results}
    \begin{tabular}{c|c|cccc}
    \toprule
    \multirow{1}{*}{Dataset}
    &Method&AUC&AP&AQ&AT (s) \cr
    \hline
    \multirow{3}{*}{Cora}&GraphMI&86.8&88.3&-&29.0\cr
    &GE&84.5&86.2&10,000&760.9\cr
    &RL-GraphMI&80.5&79.2&270,800&5870.6\cr
 
    \cline{1-6}
    \multirow{3}{*}{Citeseer}&
    GraphMI&87.8&88.5&-&30.1\cr
    &GE&86.5&87.3&10,000&787.6\cr
    &RL-GraphMI&87.5&88.2&473,200&6082.7\cr
   
    \cline{1-6}
    \multirow{3}{*}{Polblogs}&
    GraphMI&79.3&79.7&-&19.8\cr
    &GE&69.0&66.4&10,000&671.2\cr
    &RL-GraphMI&54.6&51.2&1,902,500&19102.5\cr
    
        \cline{1-6}
    \multirow{3}{*}{USA}&
    GraphMI&80.6&81.3&-&28.4\cr
    &GE&66.7&66.9&10,000&780.4\cr
    &RL-GraphMI&55.2&53.1&1,359,900&11044.1\cr
    
        \cline{1-6}
    \multirow{3}{*}{Brazil}&
    GraphMI&86.6&88.8&-&12.5\cr
    &GE&78.2&79.3&10,000&571.6\cr
    &RL-GraphMI&74.9&76.3&103,800&3389.3\cr

    \bottomrule
    \end{tabular}
    \label{black-box result}
\end{table}
In Table \ref{black-box result}, we show the attack performance of Gradient Estimation (abbreviated as GE) and RL-GraphMI. We also list the results of GraphMI for reference. Generally, GA and RL-GraphMI can achieve competitive attack performance in the black-box setting. For example, GE and RL-GraphMI achieve 86.5$\%$ and 87.5$\%$ attack AUC on Citesser, which are only 1.3 and 0.3 percent points less than the white box attack GraphMI. We also observe that RL-GraphMI has better performance on datasets with fewer nodes and edges such as Brazil, Cora, and Citesser because the agent has a smaller searching space. Compared with RL-GraphMI, gradient estimation has less average time (AT) and average queries (AQ), which makes it a more efficient choice for the black-box attack.\par

Theorem \ref{convergence therom} analyzes the convergence property of the gradient estimation method. Here, we conduct experiments to verify the theorem of convergence. Fig. \ref{convergence plot} shows that the L2 norm estimated gradient converges to 0 with respect to iterations. Therefore, Theorem \ref{convergence therom} is verified empirically. 
\begin{figure}[t]
	\centering
    \includegraphics[width=0.6\linewidth]{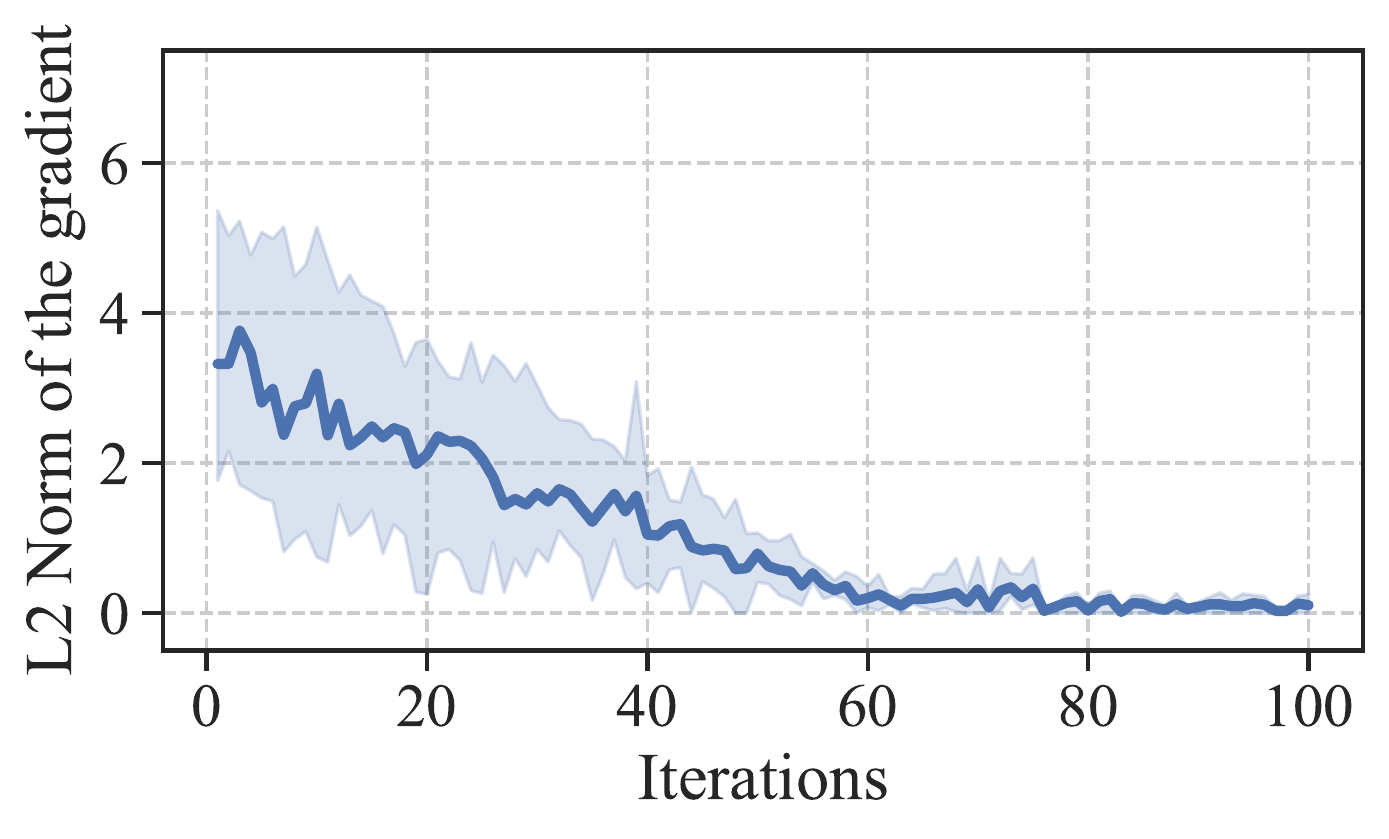}
	\caption{Convergence of the gradient estimation. We show the means and $95\%$ confidence intervals.}
	\label{convergence plot}
\end{figure}
\subsubsection{Ablation Study and Hyper-parameter Analysis}
 \begin{table}[t]
\centering
  \fontsize{8}{11}\selectfont
  \caption{Ablation study of auto-encoder module ($\%$).} 
  \vspace{-3mm}
    \begin{tabular}{c|c|cc}
    \toprule
    \multirow{1}{*}{Dataset}&
    w/ auto-encoder module&AUC&AP \\
    \hline
    \multirow{2}{*}{Cora}&Yes&\textbf{86.8}&\textbf{88.3}\cr
    &No&82.5&81.7\cr
 
    \cline{1-4}
    \multirow{2}{*}{Citeseer}&
    Yes&\textbf{87.8}&\textbf{88.5}\\
    &No&85.6&86.2\\
   
    \cline{1-4}
    \multirow{2}{*}{Polblogs}&
    Yes&\textbf{79.3}&\textbf{79.7}\\
    &No&70.1&70.4\\

    \bottomrule
    \end{tabular}
\label{ablation study}
\end{table}

Here we conduct ablation studies to show the impact of the auto-encoder module on the attack performance in Table \ref{ablation study}. We can observe that with the auto-encoder module, GraphMI achieves the best results over all three datasets. This is because the node embeddings generated by the graph auto-encoder module fully exploit the information of node attributes, graph topology, and the target model. The similarity of such node embeddings could indicate the existence of edges more precisely.\par
We also explore the sensitivity of hyper-parameters $\alpha$, $\beta$, $\mu$, $q$, and $\rho$. In Fig~\ref{param analysis}, we alter the value of
hyperparameters to see how they affect the attack performance. Specifically, we vary $\alpha$ from 0.00025 to 0.008, $\beta$ from 0.00005 to 0.0016, $\mu$ from 0.0025 to 0.08, $q$ from 12 to 400, and $\rho/\rho_g$ from 0.25 to 4 in a log scale of base 2. As we can observe, the attack performance of GraphMI can be boosted by choosing proper values for $\alpha$ and $\beta$. For the hyperparameters in the black-box attack, gradient estimation has the optimal performance at $\mu = 0.01$. This may be explained by the fact that too large step leads to biased gradient estimation while too small step incurs relatively more random noise. Gradient estimation can achieve better performance with larger $q$ due to more accurate estimation. Considering the efficiency, we choose $q=100$ as the default setting in experiments. In Fig~\ref{param analysis}(e), we show how the influence of estimated graph density $\rho$ and $\rho_g$ is the ground truth density. We can observe that our attack is generally robust to $\rho$: GraphMI achieves over 0.90 on all metrics with different estimated graph density $\rho$. What's more, the optimal performance of GraphMI is achieved when $\rho= \rho_g$.

\section{Defending against GraphMI}
In this section, we introduce and evaluate two defense methods against GraphMI: one is to train GNNs with differential privacy and the other is to preprocess the graph by perturbing edges.
\subsection{Defense Performance of Differentail Privacy}
Differential privacy (DP) is one general approach for protecting privacy. More formally:
\begin{mydef}
    (($\epsilon,\delta$)-differential privacy) A randomized mechanism $M$ with domain $\mathcal{R}$ and output $\mathcal{S}$ satisfies ($\epsilon,\delta$)-differential privacy if for any neighboring  datasets $D, D' \in \mathcal{R}$, which differ by at most one record and for any subsets of outputs $S$ it holds that:
\begin{equation}
    Pr[M(D)\in S]\le e^{\epsilon}Pr[M(D')\in S]+\delta,
\end{equation}
where $\epsilon$ is the privacy budget and $\delta$ denotes the failure probability.
\end{mydef}
Here, we investigate the impact of differential privacy on three attacks. $(\epsilon,\delta)$ -
DP is ensured by adding Gaussian noise to the clipped gradients in each training iteration \cite{abadi2016deep}. In experiments, $\delta$ is set to $10^{-5}$ and the noise scale is varied to obtain target GNN models with different $\epsilon$ from 1.0 to 10.0.
The GraphMI attack performance and their model utility are presented in Table \ref{DP}. As the privacy budget $\epsilon$ drops, the performance of GraphMI attack deteriorates at the price of a huge utility drop. For example, the trained GNN model only retains a node classification accuracy of 0.48 when the GraphMI attack AUC is decreased to 0.60. Generally, enforcing DP on target models cannot prevent GraphMI attack while preserving decent model utility.
\subsection{Defense Performance of Graph Preprocessing}
Here we propose one new defense method based on graph preprocessing. Instead of adding noise in GNN model training, our method pre-processes the training graph to "hide" real edges from model inversion. Specifically, we apply the following three strategies in training graph pre-processing:

\textbf{Randomly rewiring: }For the exisiting edge ($v_i, v_j$) in the training graph, with probability $p$, the rewiring operation deletes the
existing edge between nodes $v_i$ and $v_j$, while adding an edge to connect nodes $v_i$ and $v_k$ or $v_j$ and $v_k$ ($k \neq i$ or $j$).\par
\textbf{Adding new edges: }Specifically, to preserve utility of trained GNN model, we add pseudo edges with high feature similarity. Compared with convolutional neural networks for image data, graph neural network generally has fewer layers and parameters. GNN models essentially aggregate features according to graph structure. Previous work on graph adversarial learning~\cite{wu2019adversarial} suggests that adding edges with low feature similarity will severely undermines the utility of GNN model in node classification. Compared with adding edges randomly in the training graph, adding edges with high feature similarity helps preserve model utility. \par
\textbf{Randomly flipping edges: }For each entry in the adjacency matrix, with probability $1-p~(p \in (0,1])$ we keep it the same as the original graph; with probability $p$, we sample it randomly from $Bern(1/2)$. $Bern(\cdot)$ is the Bernoulli distribution. Such perturbation is proven to be $\epsilon$-edge level differential privacy ($\epsilon = ln(\frac{2}{p}-1)$) \cite{wu2021linkteller}.
\begin{table}[t]
\centering
\caption{The performance of the GraphMI, GE and RL-GraphMI attack (AUC) against GCN trained with differential privacy on Cora dataset}
\begin{tabular}{lcccc}
\toprule
Method & ACC & GraphMI&GE& RL-GraphMI\\ \midrule
$\epsilon=1.0 $ &                0.48             &         0.60&0.61&0.58       \\ 
$\epsilon=5.0 $  &               0.65              &        0.72&0.70&0.65        \\ 
$\epsilon=10.0 $ &               0.78               &         0.84&0.79&0.76       \\ 
no DP   &                        0.80 &0.87&0.85&0.81        \\ \bottomrule
\end{tabular}
\label{DP}
\end{table}
\begin{figure}[t]
	\centering
    \subfigure[GraphMI]{\includegraphics[width=0.32\linewidth]{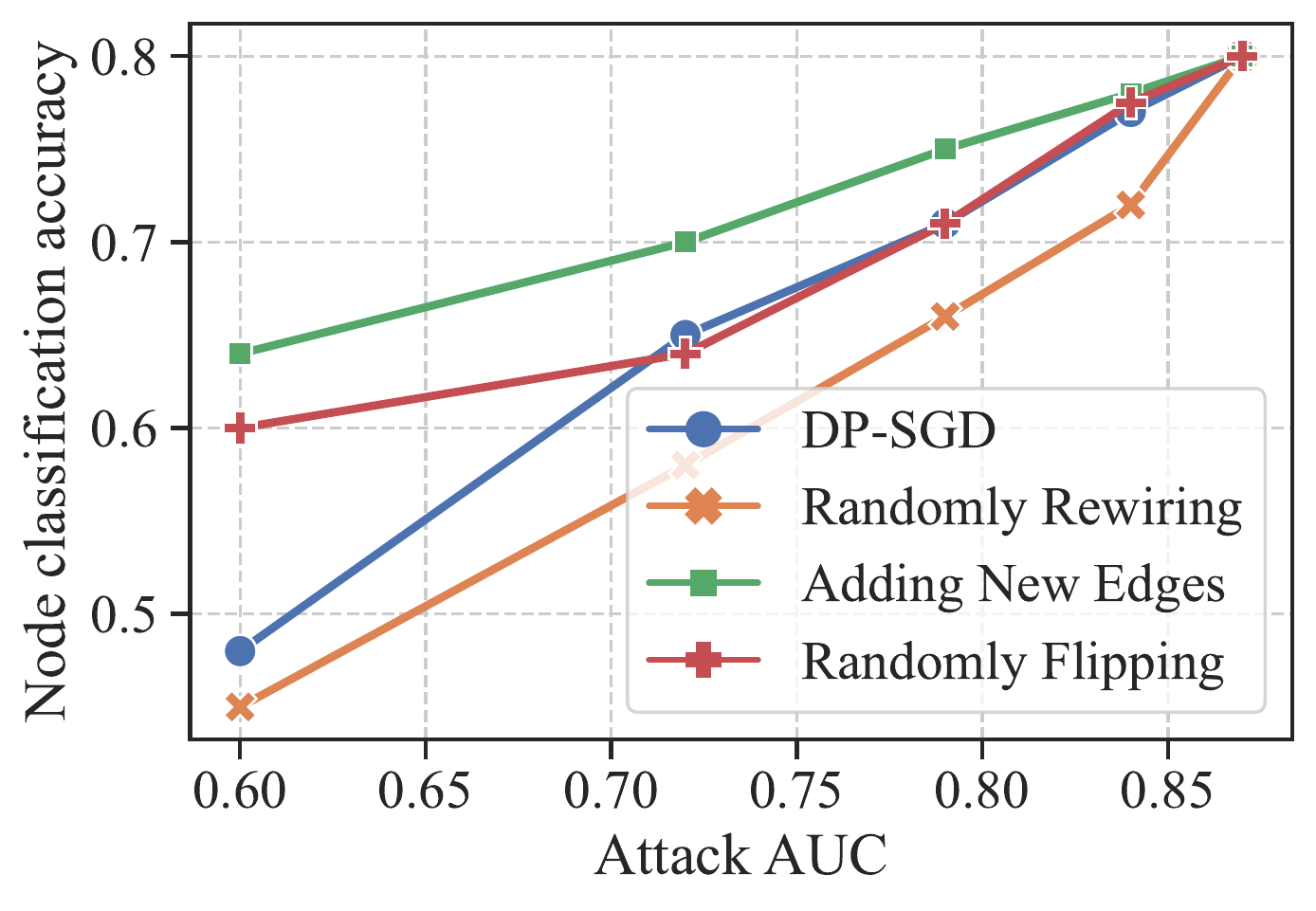}}
    \subfigure[GE]{\includegraphics[width=0.32\linewidth]{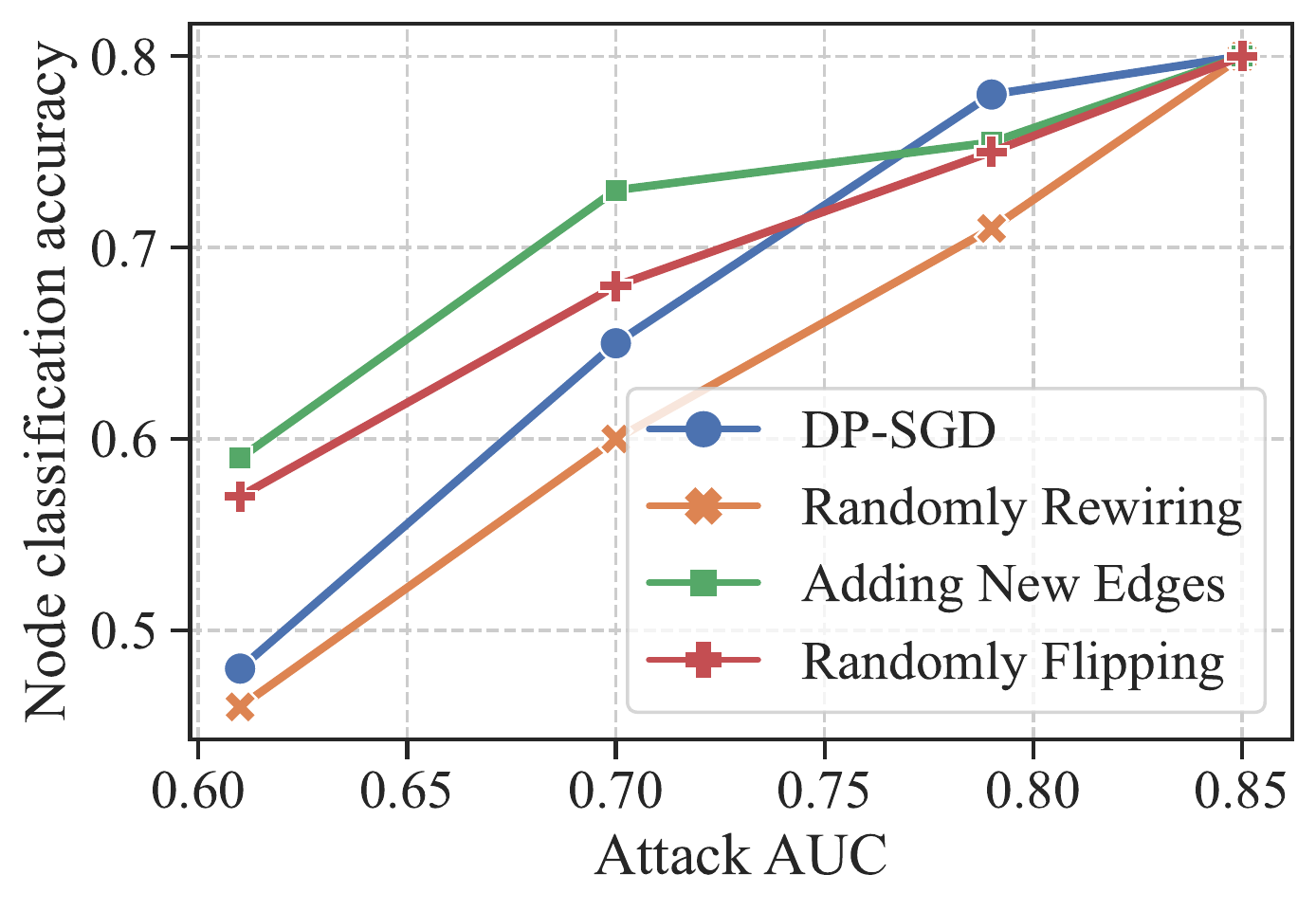}}
    \subfigure[RL-GraphMI]{\includegraphics[width=0.32\linewidth]{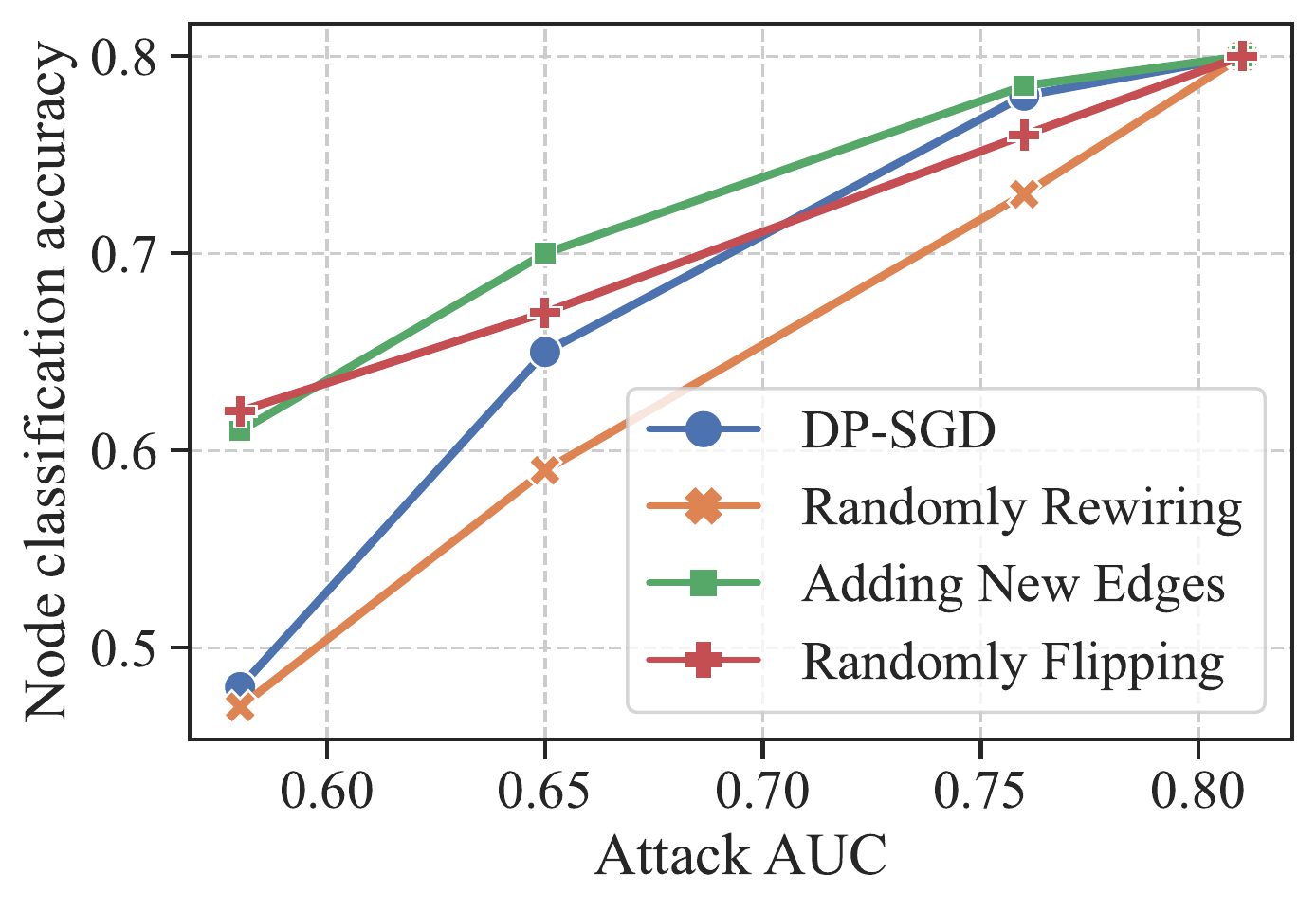}}
	\caption{The trade-off curves between attack AUC and node classification accuracy on Cora dataset.}
	\label{graph preprocessing}
\end{figure}

Figure \ref{graph preprocessing} shows the trade-off between GNN utility and attack AUC of GraphMI, GE, and RL-GraphMI. Four defense strategies are shown, including DP-SGD and three graph pre-processing methods. Generally, all these defense methods can decrease the attack AUC at the price of node classification accuracy. Among the three pre-processing strategies, adding edges has the best defense performance. For instance, the trained GNN model has a classification accuracy of 0.64 when the attack AUC is decreased to 0.60 for GraphMI. This verifies our claim that adding edges with high feature similarity helps preserve model utility while defending against GraphMI attacks. On the other hand, randomly rewiring edges has a poor performance. The rewiring operation may link nodes with distinct labels or dissimilar features, which may damage the GNN model utility a lot. 
\subsection{Discussion}
The defense results in Section 7.1 and Section 7.2 indicate that our graph model inversion attack is still effective even with differential privacy or graph preprocessing. There are several potential ways to strengthen the defense such as adversarial training and detection of adversarial queries in the black-box setting. We left them for future work.
\section{Conclusion}
In this paper, we conducted a systematic study on model inversion attacks against graph neural networks.
We first presented GraphMI, a white-box attack specifically designed and optimized for extracting private graph-structured data from GNNs.
Though GraphMI could effectively reconstruct the edges by exploiting the properties of graph and GNNs, it is infeasible in the black-box setting where the attacker is only allowed to query the GNN API and receive the classification results.
Therefore, we extended GraphMI to the hard-label black-box setting and proposed two black-box attacks based on gradient estimation and reinforcement learning.
Extensive experimental results showed its effectiveness on several state-of-the-art graph neural networks.
Finally, we showed that imposing differential privacy on graph neural networks and graph preprocessing are not effective enough to protect privacy while preserving decent model utility.

This paper provided potential tools for investigating the privacy risks of deep learning models on graph-structured data.
Interesting future directions include: 1) Design black-box attacks with better query efficiency; 2) Design countermeasures with a better trade-off between utility and privacy.
\vspace{-0.5cm}
\section*{Acknowledgment}
This research was partially supported by grants from the National Natural Science Foundation of China (Grants No.61922073 and U20A20229), and the Fundamental Research Funds for the Central Universities (Grant No.WK2150110021).

\bibliographystyle{abbrv}
\bibliography{main}
\vspace{-0.5cm}

\ifCLASSOPTIONcaptionsoff
  \newpage
\fi



%

%

\begin{IEEEbiography}[{\includegraphics[width=1in,height=1.25in,clip,keepaspectratio]{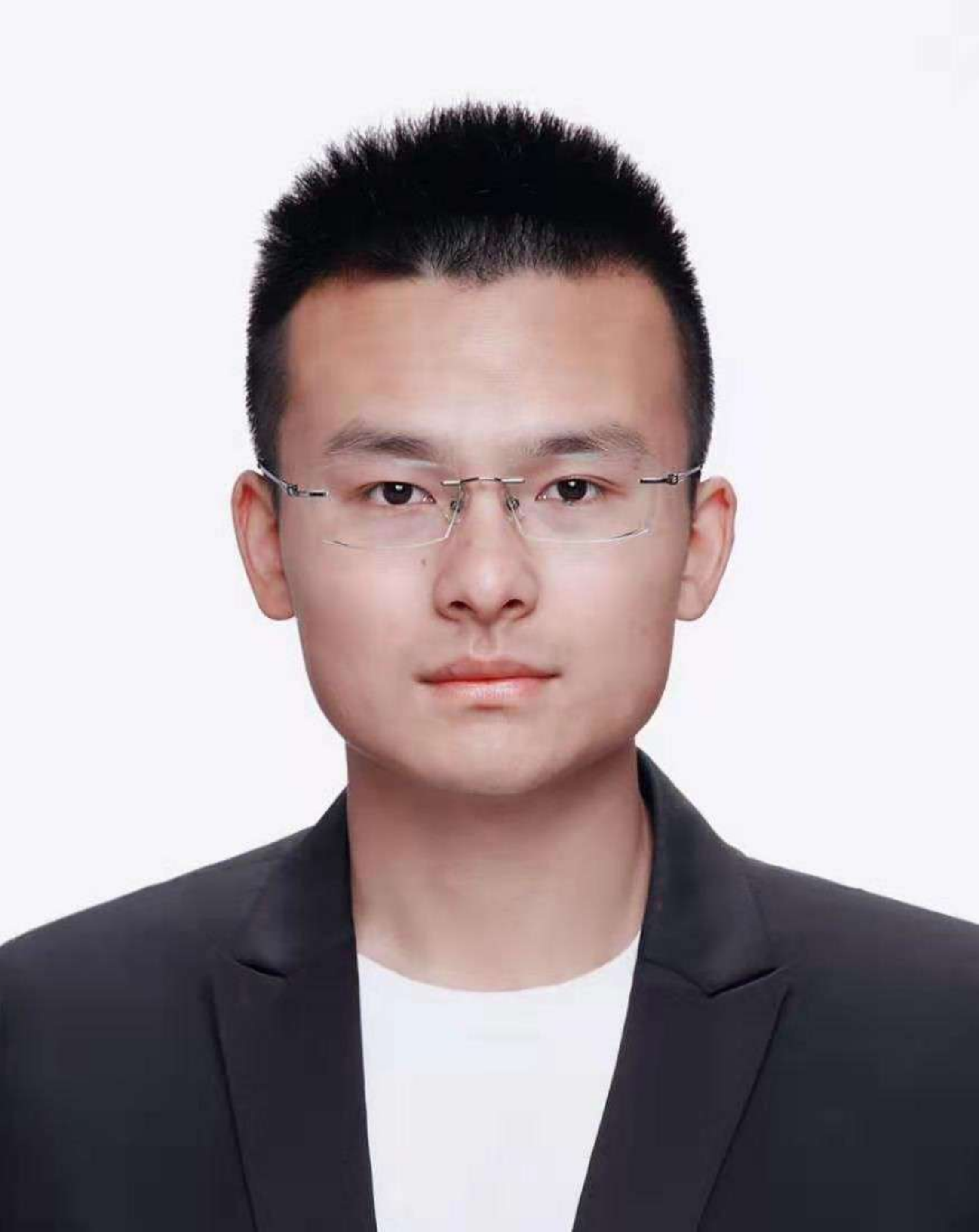}}]
	{Zaixi Zhang} received the BS degree from the department of gifted young, University of Science and Technology of China (USTC), China, in 2019. He is currently working toward the Ph.D. degree in the School of Computer Science and Technology at USTC. His main research interests include data mining, machine learning security and privacy, graph representation learning. He has published papers in referred conference proceedings, such as IJCAI, NeurIPS, KDD, and AAAI.
\end{IEEEbiography}

\begin{IEEEbiography}[{\includegraphics[width=1in,height=1.25in,clip,keepaspectratio]{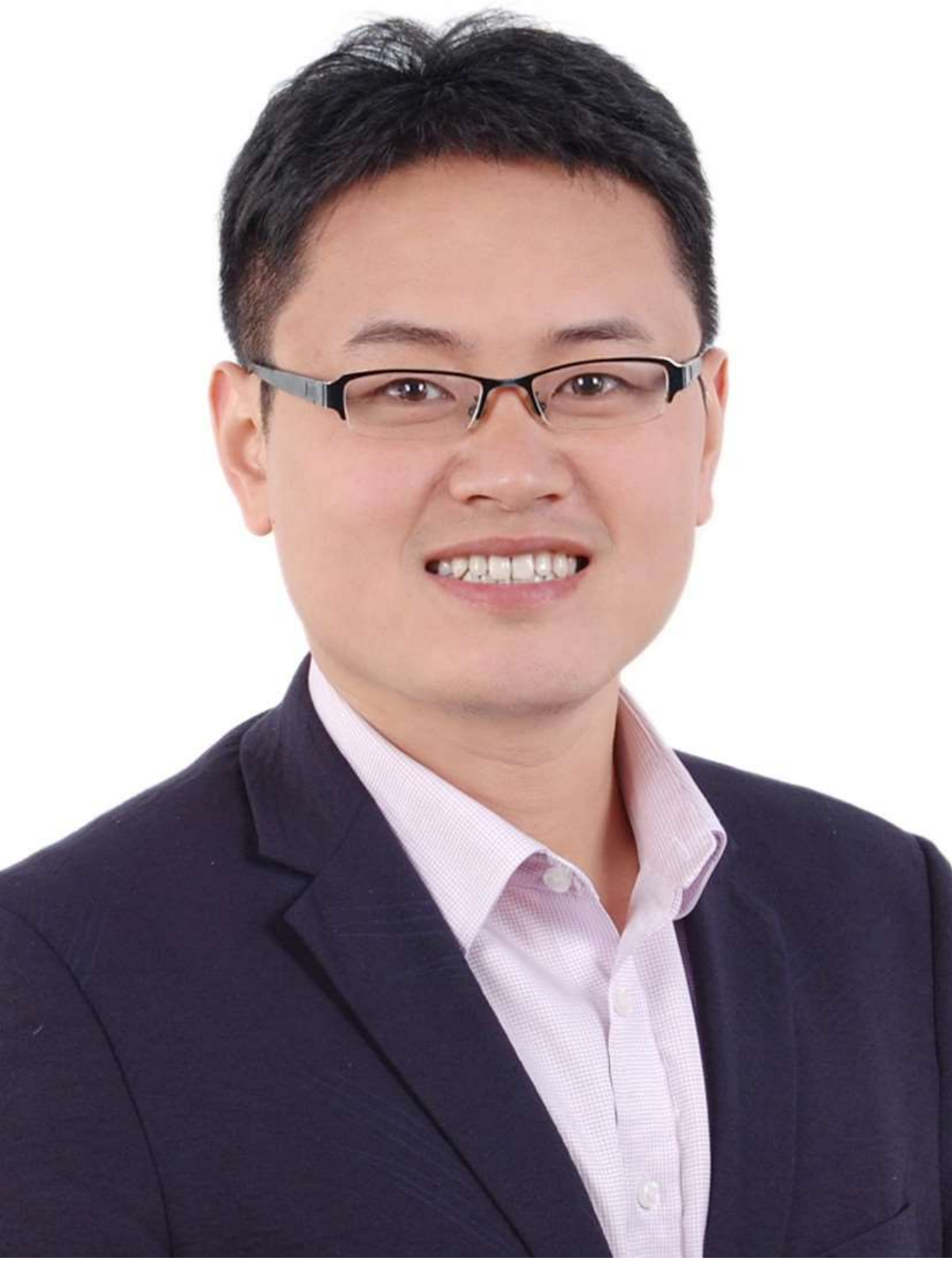}}]
	{Qi Liu} received the Ph.D. degree from University of Science and Technology of China (USTC), Hefei, China, in 2013. He is currently a Professor in the School of Computer Science and Technology at USTC. His general area of research is data mining and knowledge discovery. He has published prolifically in refereed journals and conference proceedings (e.g., TKDE, TOIS, KDD). He is an Associate Editor of IEEE TBD and Neurocomputing. He was the recipient of KDD' 18 Best Student Paper Award and ICDM'~11 Best Research Paper Award. He is a member of the Alibaba DAMO Academy Young Fellow. He was also the recipient of China Outstanding Youth Science Foundation in 2019. 
\end{IEEEbiography}
\vspace{-0.25cm}
\begin{IEEEbiography}[{\includegraphics[width=1in,height=1.25in,clip,keepaspectratio]{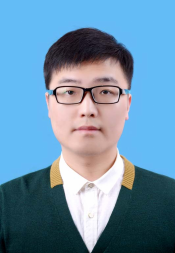}}]
{Zhenya Huang} recieved the B.E. degree from
Shandong University, in 2014 and the Ph.D. degree from University of Science and Technology
of China (USTC), in 2020. He is currently an
associate researcher of the School of Computer
Science and Technology, USTC. His main research interests include data mining, knowledge
discovery, representation learning and intelligent
tutoring systems. He has published more than
20 papers in refereed journals and conference
proceedings including TKDE, TOIS, KDD, AAAI.
\end{IEEEbiography}
\vspace{-0.25cm}
\begin{IEEEbiography}[{\includegraphics[width=1in,height=1.25in,clip,keepaspectratio]{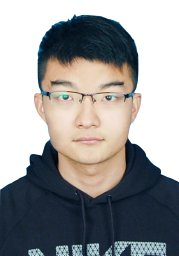}}]
{Hao Wang} Hao Wang is currently an associate researcher of the School of Computer
Science and Technology, USTC. His main research interests include data mining, representation learning, network embedding and recommender systems. He has published several papers in referred conference proceedings, such as TKDE, TOIS, NeurIPS, and KDD. 
\end{IEEEbiography}
\vspace{-0.25cm}
\begin{IEEEbiography}[{\includegraphics[width=1in,height=1.25in,clip,keepaspectratio]{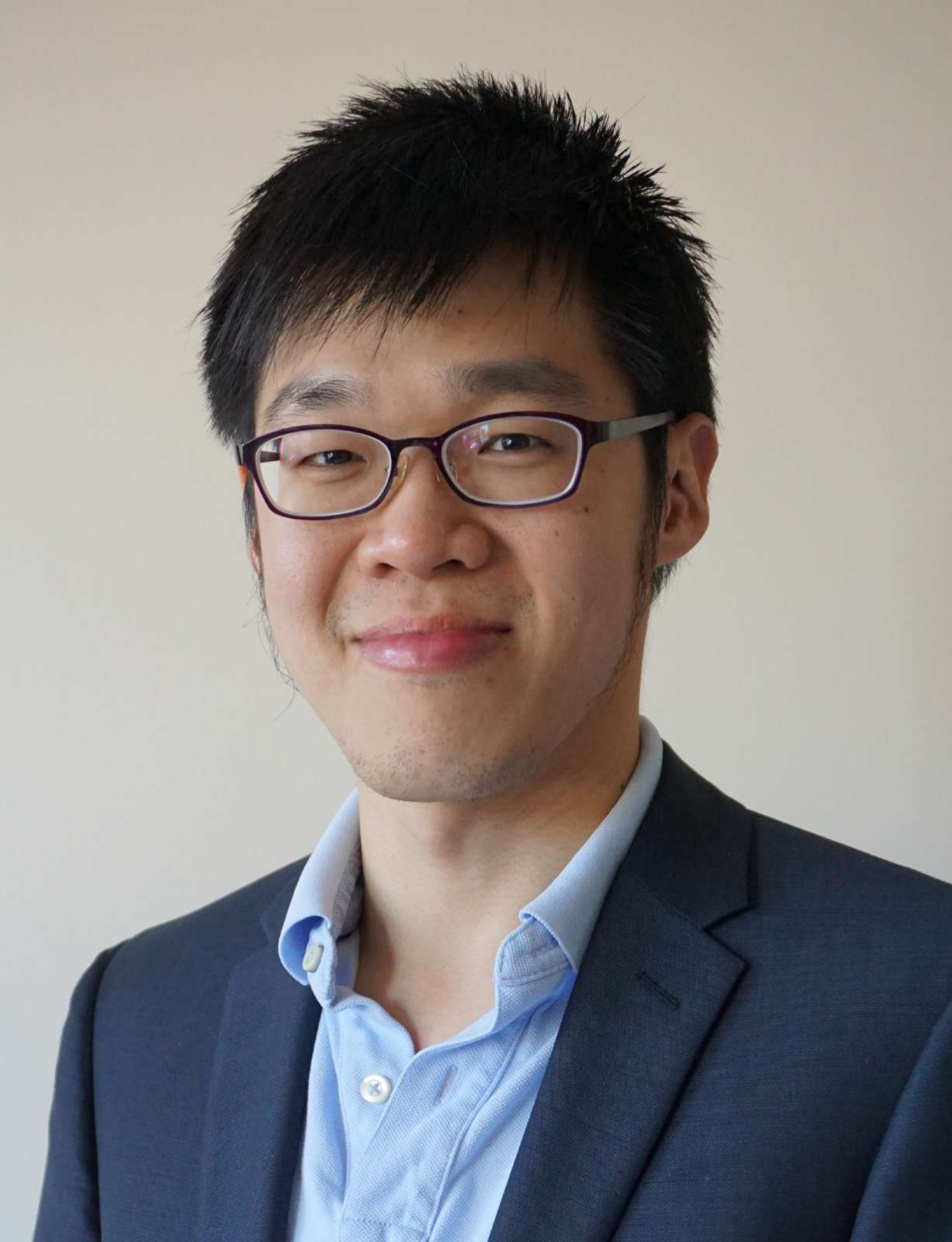}}]
	{Chee-Kong Lee} is a senior researcher at Tencent Quantum Lab. He obtained his undergraduate degree in physics from the National University of Singapore and PhD degree in theoretical chemistry from Massachusetts Institute of Technology (MIT). His research interests include quantum computing, computational chemistry, and applications of machine learning in chemistry and physics.
\end{IEEEbiography}
\vspace{-0.25cm}
\begin{IEEEbiography}[{\includegraphics[width=1in,height=1.25in,clip,keepaspectratio]{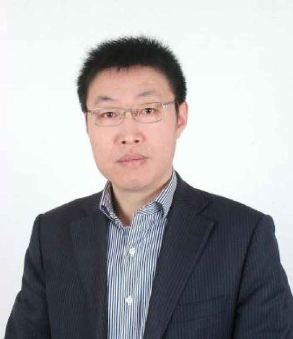}}]
	{Enhong Chen}(SM'07) is a professor and vice dean of the School of Computer Science at University of Science and Technology of China  (USTC). He received the Ph.D. degree from USTC. His general area of research includes data mining and machine learning, social network analysis and recommender systems. He has published more than 100 papers in refereed conferences and journals, including IEEE Trans. KDE, IEEE Trans. MC, KDD, ICDM, NIPS, and CIKM. He was on program committees of numerous conferences including KDD, ICDM, SDM. He received the Best Application Paper Award on KDD-2008, the Best Student Paper Award on KDD-2018 (Research), the Best Research Paper Award on ICDM-2011 and Best of SDM-2015. His research is supported by the National Science Foundation for Distinguished Young Scholars of China.
	He is a senior member of the IEEE.
\end{IEEEbiography}




\appendices
\clearpage
\section{Proof of Theorem 1}
\textbf{\emph{Theorem 1.}}
$Adv(e_i) \le Adv(e_j),~ \forall e_i, e_j \in \mathcal{E} \cap \mathcal{I}(e_i) \le \mathcal{I}(e_j)$ i.e. The adversary advantage is greater for edges with greater influence.

\noindent
The proof is based on the lemma \ref{lemma} from \cite{wu2016methodology}.
\begin{Lemma}
\label{lemma}
Suppose the target model $f$ is trained on data distribution $p(\mathcal{X},\mathcal{Y})$. $\mathcal{X}= (x_s, x_{ns})$, where $x_s$ and $x_{ns}$ denote the sensitive and non-sensitive part of feature respectively.  The optimal adversary advantage is
\[P_{\mathcal{X} \sim p(\mathcal{X},\mathcal{Y})}[f(x_s=1, x_{ns}) \neq f(x_s=0, x_{ns})] .\]
\end{Lemma}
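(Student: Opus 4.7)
The plan is to apply Lemma~\ref{lemma} to reduce $Adv(e)$ to a disagreement probability about the target model's outputs, and then express both $Adv(e)$ and $\mathcal{I}(e)$ through the same node-level indicators so that a monotone comparison can be read off directly.

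First, I would instantiate Lemma~\ref{lemma} in the graph model inversion setting. For a fixed edge $e$, take $x_s \in \{0,1\}$ to be the existence bit of $e$ and $x_{ns}$ to be the rest of the adjacency matrix $A_{-e}$ together with the node feature matrix $X$. Since the optimal adversary's advantage coincides with the disagreement probability, the lemma gives
\begin{equation*}
Adv(e) \;=\; \Pr\bigl[f_{\theta}(A, X) \neq f_{\theta}(A_{-e}, X)\bigr].
\end{equation*}
Interpreting the probability in the node classification setting as the empirical average over the $N$ training nodes (since $f_\theta^i$ is the per-node output), this becomes
\begin{equation*}
Adv(e) \;=\; \tfrac{1}{N}\sum_{i=1}^{N}\mathbf{1}\bigl[f_\theta^i(A,X)\neq f_\theta^i(A_{-e},X)\bigr].
\end{equation*}

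Next, I would decompose the nodes according to how removing $e$ affects the correctness of the prediction. Let $S^{+}(e)=\{i: f_\theta^i(A,X)=y_i,\; f_\theta^i(A_{-e},X)\neq y_i\}$ be the nodes where $e$ was \emph{helpful}, $S^{-}(e)=\{i: f_\theta^i(A,X)\neq y_i,\; f_\theta^i(A_{-e},X)=y_i\}$ the nodes where $e$ was \emph{harmful}, and $T(e)$ the residual nodes where both predictions are wrong but still differ. Substituting into the definitions yields
\begin{equation*}
N\,\mathcal{I}(e) \;=\; |S^{+}(e)| - |S^{-}(e)|, \qquad N\,Adv(e) \;=\; |S^{+}(e)| + |S^{-}(e)| + |T(e)|.
\end{equation*}
Monotonicity will then follow under the standard training-regime assumption that $f_\theta$ fits the training labels well, so that $S^{-}(e)$ and $T(e)$ are negligible relative to $S^{+}(e)$ for every training edge: in that regime both $N\mathcal{I}(e)$ and $N\cdot Adv(e)$ are governed by $|S^{+}(e)|$, and $\mathcal{I}(e_i)\le \mathcal{I}(e_j)$ forces $|S^{+}(e_i)|\le |S^{+}(e_j)|$ and hence $Adv(e_i)\le Adv(e_j)$.

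The main obstacle I foresee is that the signed quantity $\mathcal{I}(e)$ and the unsigned disagreement probability $Adv(e)$ are not monotone in full generality: a pathological target model could shrink $|S^{+}(e)|-|S^{-}(e)|$ while enlarging $|S^{+}(e)|+|S^{-}(e)|+|T(e)|$. The proof must therefore either invoke the ``well-trained GNN'' condition sketched above (under which $S^{-}(e)\cup T(e)$ is small for edges in the training graph, since flipping an already-correct prediction to another wrong class by deleting a single edge is rare in a converged model), or tighten the definition of $\mathcal{I}(e)$ to an absolute-change variant so that $N\,\mathcal{I}(e)=|S^{+}(e)|+|S^{-}(e)|$ and the comparison against $Adv(e)$ becomes immediate. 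Either route closes the argument once Lemma~\ref{lemma} has been transported to the graph setting.
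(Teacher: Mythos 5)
Your proposal does not prove the statement it was assigned. The statement is Lemma~1 itself: the claim that the \emph{optimal} adversary's advantage in inferring the sensitive bit $x_s$ equals the disagreement probability $P_{\mathcal{X}\sim p(\mathcal{X},\mathcal{Y})}[f(x_s=1,x_{ns})\neq f(x_s=0,x_{ns})]$. Your very first sentence is ``apply Lemma~1 to reduce $Adv(e)$ to a disagreement probability,'' i.e., you assume the target statement and use it to derive Theorem~1 (monotonicity of $Adv$ in the edge influence $\mathcal{I}$). That is circular with respect to the assigned claim, and the one idea a proof of the Lemma actually needs is absent: you must set up the adversary's inference game --- the adversary observes the non-sensitive features $x_{ns}$ and the model output $y=f(x_s,x_{ns})$ and must guess $x_s\in\{0,1\}$ --- and argue Bayes-optimality: on the event $\{f(1,x_{ns})=f(0,x_{ns})\}$ the output carries no information about $x_s$ and no adversary beats the $1/2$ baseline, while on the complementary event the output determines $x_s$ exactly, so the optimal success probability (hence the advantage over $1/2$) is governed precisely by the disagreement probability. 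For the record, the paper does not prove this Lemma either; it imports it from Wu et al.\ \cite{wu2016methodology}, and the ``Proof'' paragraph that follows it in Appendix~A is the proof of Theorem~1, not of the Lemma.

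Read instead as a proof of Theorem~1, your argument takes a genuinely different route from the paper's. The paper sets $p=P[f_\theta^i(A,X)=y_i]$ and $q=P[f_\theta^i(A_{-e},X)=y_i]$, writes the advantage as $Adv=p(1-q)+q(1-p)$, and differentiates with respect to $x=p-q=\mathcal{I}(e)$ to obtain $\partial Adv/\partial x=p-q\ge 0$; your version decomposes the nodes into the sets $S^{+},S^{-},T$ and then must invoke an extra ``well-trained model'' assumption to make $S^{-}$ and $T$ negligible. Your own caveat is the honest one: the signed quantity $|S^{+}|-|S^{-}|$ and the unsigned quantity $|S^{+}|+|S^{-}|+|T|$ are not monotonically related in general, so the theorem fails without an additional hypothesis --- a weakness that equally afflicts the paper's argument, which silently assumes $q\le p$ for every edge and treats the two predictions as independent when writing $Adv=p(1-q)+q(1-p)$. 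But none of this substitutes for a proof of the Lemma you were asked to establish.
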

\vspace{-0.3cm}
\textbf{\emph{Proof.}} In our graph setting, $x_{ns}$ refers to feature matrix $X$ and $x_s$ refers to edges.
Set $P[f^i_{\theta}(A, X)= y_i]$ = $P_{f_{\theta}}(y_i~ |~ A, X) = p$; $P[f^i_{\theta}(A_{-e}, X)= y_i] = P_{f_{\theta}}(y_i ~|~ A_{-e}, X) = q$.
For $f_\theta$, the prediction accuracy is higher with edge $e$, i.e. $q\le p$.
The adversary advantage is $Adv =[p(1-q)+q(1-p)]$.
Through variable substitution ($x=p-q, y=p+q$), we have $Adv= (y+\frac{x^2-y^2}{2}) $ and $\frac{\partial Adv}{\partial \mathcal{I}(e)} = \frac{\partial Adv}{\partial x}= p-q \ge 0$. Hence, $Adv(e_i) \le Adv(e_j)$ if $e_i, e_j \in \mathcal{E}$ and $\mathcal{I}(e_i) \le \mathcal{I}(e_j)$.
\hfill $\square$
\vspace{-0.3cm}
\section{Proof of Theorem 2}
\begin{Lemma}
Define $\mathcal{L}_\mu(a) = \mathbb{E}_{v\in U_b}[\mathcal{L}_{attack}(a+\mu v)]$, where $U_b$ denotes a uniform distribution with respect to the unit Euclidean ball. We have:
\begin{equation}
    \mathbb{E}[\hat{\nabla}\mathcal{L}_{attack}(a)] =\mathbb{E}_u [d/\mu \mathcal{L}_{attack}(a+\mu u)] =\nabla \mathcal{L}_\mu(a),
\end{equation}
where $u$ is a vector picked uniformly at random from the Euclidean unit sphere. Under Assumption 1 and 2, $\mathcal{L}_\mu$ is $L_1$-Lipschitz continuous and has $L_2$-Lipschitz continuous gradient. 
\end{Lemma}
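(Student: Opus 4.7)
The plan is to tackle the two claims separately: first the unbiasedness identity for the estimator, and then the Lipschitz continuity of $\mathcal{L}_\mu$ and of $\nabla \mathcal{L}_\mu$.

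For the unbiasedness identity, I would first observe that since $u$ is drawn uniformly from the Euclidean unit sphere, the distribution of $u$ is symmetric about the origin. Substituting $u \mapsto -u$ inside the expectation therefore gives $\mathbb{E}_u[\mathcal{L}_{attack}(a-\mu u)\,u] = -\mathbb{E}_u[\mathcal{L}_{attack}(a+\mu u)\,u]$. Applied term-by-term to the symmetric two-point estimator, this collapses the finite difference to
\[\mathbb{E}[\hat{\nabla}\mathcal{L}_{attack}(a)] = \mathbb{E}_u\!\left[\frac{d}{\mu}\,\mathcal{L}_{attack}(a+\mu u)\,u\right],\]
which matches the middle expression in the statement. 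The remaining step is the classical Flaxman--Kalai--McMahan smoothing identity. I would rewrite $\mathcal{L}_\mu(a) = \mathbb{E}_{v\in U_b}[\mathcal{L}_{attack}(a+\mu v)]$ as a normalized volume integral, change variables by $x=a+\mu v$, and then invoke the divergence theorem to convert the volume integral of $\nabla \mathcal{L}_{attack}$ over the ball of radius $\mu$ into a surface integral over the sphere with the outward unit normal $u$. The ratio between the surface area of the sphere and the volume of the ball in $\mathbb{R}^d$ produces exactly the factor $d/\mu$, yielding $\nabla \mathcal{L}_\mu(a)=\mathbb{E}_u[(d/\mu)\mathcal{L}_{attack}(a+\mu u)\,u]$.

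For the Lipschitz properties, I would use Jensen's inequality directly. Under Assumption 1, for any $a,b$,
\[|\mathcal{L}_\mu(a)-\mathcal{L}_\mu(b)| \le \mathbb{E}_v|\mathcal{L}_{attack}(a+\mu v)-\mathcal{L}_{attack}(b+\mu v)| \le L_1\|a-b\|,\]
so $\mathcal{L}_\mu$ inherits $L_1$-Lipschitz continuity. Under Assumption 2, dominated convergence lets us swap gradient and expectation, giving $\nabla \mathcal{L}_\mu(a)=\mathbb{E}_v[\nabla\mathcal{L}_{attack}(a+\mu v)]$; another Jensen step produces
\[\|\nabla\mathcal{L}_\mu(a)-\nabla\mathcal{L}_\mu(b)\| \le \mathbb{E}_v\|\nabla\mathcal{L}_{attack}(a+\mu v)-\nabla\mathcal{L}_{attack}(b+\mu v)\| \le L_2\|a-b\|,\]
establishing the $L_2$-smoothness of $\mathcal{L}_\mu$.

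The main obstacle is the divergence-theorem step that converts the ball average of the gradient into a sphere integral of the function itself and accounts for the $d/\mu$ factor; the rest is a routine combination of symmetrization and Jensen's inequality under Assumptions~1 and~2.
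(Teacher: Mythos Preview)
Your proposal is correct and in fact supplies the argument that the paper only cites: the paper's own ``proof'' of this lemma is a bare reference to Lemma~1 of Flaxman--Kalai--McMahan (2004), and the symmetrization plus divergence-theorem computation you outline is precisely the content of that lemma. Your additional Jensen-type arguments for inheriting the $L_1$- and $L_2$-Lipschitz properties from $\mathcal{L}_{attack}$ to $\mathcal{L}_\mu$ are standard and correct, and go slightly beyond what the paper makes explicit.
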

\textbf{\emph{Proof.}}
The proof can be found in \cite{flaxman2004online} Lemma 1.

\begin{Lemma}
If Assumption 2. holds and $\eta_k \in (0, 1/L_2)$, then the outputs $\{a_t\}_{t=0}^{T-1}$ from the gradient estimation algorithm satisfies:
\begin{align}
    &\sum_{t=0}^{T-1}((2\eta_t - L_2 \eta_t^2) \mathbb{E}[\|P_{[0,1]}(a_r, \hat{\nabla} \mathcal{L}_{attack}(a_t), \eta_t)\|^2])\\
    &\le \sum_{t=0}^{T-1} (2\eta_t\mathbb{E}[\|\hat{\nabla}\mathcal{L}_{attack}(a_t) - \mathbb{E}[\hat{\nabla}\mathcal{L}_{attack}(a_t)|a_t]\|^2])\\
    &+ 2\mu^2L_2 + c_2,
\end{align}
where $\mathbb{E}$ is taken with respect to all the randomness (e.g., random directions), $P_{[0,1]}$ is the gradient projection given by Equ. \ref{projection}, and $c_2 = 2(f(a_0)-f(a^*))$ ($a^*$ is the globally optimal solution).
\end{Lemma}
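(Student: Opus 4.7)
The plan is to derive the stated summed inequality from a one-step descent inequality applied to the smoothed surrogate $\mathcal{L}_\mu$ (whose $L_2$-smoothness is provided by Lemma 1), and then telescope. Throughout, let $g_t := \hat{\nabla}\mathcal{L}_{attack}(a_t)$ and $\mathcal{G}_t := P_{[0,1]}(a_t,g_t,\eta_t) = (1/\eta_t)(a_t - P_{[0,1]}(a_t - \eta_t g_t))$, so that $a_{t+1}-a_t = -\eta_t \mathcal{G}_t$. First I would apply the descent lemma for $\mathcal{L}_\mu$ and substitute the iterate update to obtain
\begin{equation*}
\mathcal{L}_\mu(a_{t+1}) \le \mathcal{L}_\mu(a_t) - \eta_t\langle \nabla \mathcal{L}_\mu(a_t), \mathcal{G}_t\rangle + \tfrac{L_2\eta_t^2}{2}\|\mathcal{G}_t\|^2.
\end{equation*}
The standard projection inequality $\langle g_t, \mathcal{G}_t\rangle \ge \|\mathcal{G}_t\|^2$ (which follows from the first-order optimality of $P_{[0,1]}(a_t-\eta_t g_t)$ and convexity of $[0,1]^n$) lets me write $\langle \nabla \mathcal{L}_\mu(a_t),\mathcal{G}_t\rangle = \langle g_t,\mathcal{G}_t\rangle - \langle g_t-\nabla\mathcal{L}_\mu(a_t),\mathcal{G}_t\rangle$ and therefore bound $-\eta_t\langle\nabla\mathcal{L}_\mu(a_t),\mathcal{G}_t\rangle \le -\eta_t\|\mathcal{G}_t\|^2 + \eta_t\langle g_t-\nabla\mathcal{L}_\mu(a_t),\mathcal{G}_t\rangle$. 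Rearranging and doubling yields the per-iteration inequality
\begin{equation*}
(2\eta_t - L_2\eta_t^2)\|\mathcal{G}_t\|^2 \le 2\bigl(\mathcal{L}_\mu(a_t)-\mathcal{L}_\mu(a_{t+1})\bigr) + 2\eta_t\langle g_t-\nabla\mathcal{L}_\mu(a_t),\mathcal{G}_t\rangle.
\end{equation*}

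Next I would sum $t=0,\dots,T-1$; the potential telescopes to $2(\mathcal{L}_\mu(a_0)-\mathcal{L}_\mu(a_T))\le 2(\mathcal{L}_\mu(a_0)-\mathcal{L}_\mu(a^*))$. Converting $\mathcal{L}_\mu$ back to $\mathcal{L}_{attack}$ uses the standard Flaxman-smoothing estimate $|\mathcal{L}_\mu(a)-\mathcal{L}_{attack}(a)|\le L_2\mu^2/2$ (a one-line consequence of $L_2$-smoothness of $\mathcal{L}_{attack}$, the symmetry $\mathbb{E}_v[v]=0$, and $\mathbb{E}_v[\|v\|^2]\le 1$ for $v\sim U_b$). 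Applying this at both $a_0$ and $a^*$ produces the clean bound $2(\mathcal{L}_\mu(a_0)-\mathcal{L}_\mu(a^*))\le c_2 + 2\mu^2 L_2$, which is exactly the non-stochastic part of the target RHS.

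The main obstacle is taking expectation of the cross term $2\eta_t\langle g_t-\nabla\mathcal{L}_\mu(a_t),\mathcal{G}_t\rangle$, because $\mathcal{G}_t$ is a nonlinear function of $g_t$, so a naive unbiasedness argument (Lemma~1 gives $\mathbb{E}[g_t\mid a_t]=\nabla\mathcal{L}_\mu(a_t)$) cannot be invoked directly. To handle this I would introduce the \emph{surrogate} gradient mapping $\tilde{\mathcal{G}}_t := P_{[0,1]}(a_t,\nabla\mathcal{L}_\mu(a_t),\eta_t)$, which is deterministic conditional on $a_t$, and decompose
\begin{equation*}
\langle g_t-\nabla\mathcal{L}_\mu(a_t),\mathcal{G}_t\rangle = \langle g_t-\nabla\mathcal{L}_\mu(a_t),\tilde{\mathcal{G}}_t\rangle + \langle g_t-\nabla\mathcal{L}_\mu(a_t),\mathcal{G}_t-\tilde{\mathcal{G}}_t\rangle.
\end{equation*}
The first summand vanishes in conditional expectation because $\tilde{\mathcal{G}}_t$ is $a_t$-measurable and $\mathbb{E}[g_t-\nabla\mathcal{L}_\mu(a_t)\mid a_t]=0$. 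For the second, I would invoke the non-expansiveness of the gradient-mapping operator $\|P_{[0,1]}(a,g,\eta)-P_{[0,1]}(a,g',\eta)\|\le\|g-g'\|$ (a direct consequence of the non-expansiveness of Euclidean projection onto the convex set $[0,1]^n$) together with Cauchy–Schwarz to obtain $\langle g_t-\nabla\mathcal{L}_\mu(a_t),\mathcal{G}_t-\tilde{\mathcal{G}}_t\rangle \le \|g_t-\nabla\mathcal{L}_\mu(a_t)\|^2$.

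Taking the total expectation of the summed descent inequality and applying the two bounds in the previous paragraph gives
\begin{equation*}
\sum_{t=0}^{T-1}(2\eta_t - L_2\eta_t^2)\,\mathbb{E}\|\mathcal{G}_t\|^2 \le \sum_{t=0}^{T-1} 2\eta_t\,\mathbb{E}\|g_t-\mathbb{E}[g_t\mid a_t]\|^2 + 2\mu^2 L_2 + c_2,
\end{equation*}
which is the claim; positivity of the LHS coefficient is guaranteed by the hypothesis $\eta_t\in(0,1/L_2)$. The remaining routine checks are the precise constant in $|\mathcal{L}_\mu-\mathcal{L}_{attack}|\le L_2\mu^2/2$ and the non-expansiveness of the projected-gradient map—both are one-liners from $L_2$-smoothness and elementary convex-analysis facts.
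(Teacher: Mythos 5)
Your proof is correct. Note, however, that the paper itself does not prove this lemma at all: its ``proof'' is the single line ``The proof can be found in \cite{liu2018zeroth} Proposition 1,'' so there is no in-paper argument to compare against. What you have written is a complete, self-contained derivation of exactly the kind of statement that citation covers, and it is the standard Ghadimi--Lan-style gradient-mapping analysis: descent lemma on the smoothed surrogate $\mathcal{L}_\mu$, the projection inequality $\langle g_t,\mathcal{G}_t\rangle\ge\|\mathcal{G}_t\|^2$, telescoping, the Flaxman bound $|\mathcal{L}_\mu-\mathcal{L}_{attack}|\le L_2\mu^2/2$ to convert the telescoped potential into $c_2+2\mu^2L_2$, and the surrogate-mapping trick plus non-expansiveness of the projection to reduce the cross term to the conditional variance $\mathbb{E}\|g_t-\mathbb{E}[g_t\mid a_t]\|^2$. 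That last step is the one place where a naive argument would fail (since $\mathcal{G}_t$ depends nonlinearly on $g_t$, unbiasedness alone does not kill the cross term), and you handle it correctly; you also correctly read the $a_r$ in the lemma's left-hand side as a typo for $a_t$ and correctly use the hypothesis $\eta_t\in(0,1/L_2)$ only to keep the coefficient $2\eta_t-L_2\eta_t^2$ positive. In short: your argument supplies the proof the paper outsources, and it matches the constants in the stated bound exactly.
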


\textbf{\emph{Proof.}}
The proof can be found in \cite{liu2018zeroth} Proposition 1.

\begin{theorem}
Suppose Assumption 1$\&$2 hold. If we randomly pick $a_r$, whose dimension is $d$, from $\{a_t\}_{t=0}^{T-1}$ with probability $P(r=t) = \frac{2\eta_t-L_2\eta_t^2}{\sum_{t=0}^{T-1}2\eta_t-L_2\eta_t^2}$ Then, we have the following bound on the convergence rate with $\eta_t = \mathcal{O}(\frac{1}{\sqrt{T}})$ and $\mu = \mathcal{O}(\frac{1}{\sqrt{dq}})$: 
\begin{equation}
    \mathbb{E} [\|P_{[0,1]}(a_r, \hat{\nabla} \mathcal{L}_{attack}(a_r), \eta_r)\|^2] = \mathcal{O}(\frac{1}{\sqrt{T}}+\frac{d+q}{q}),
\end{equation}
where $P_{[0,1]}(a_r, \hat{\nabla} \mathcal{L}_{attack}(a_r), \eta_r) \coloneqq (1/\eta_r) [a_r - P_{[0,1]}(a_r - \eta_r \hat{\nabla} \mathcal{L}_{attack}(a_r))$ is the rectified gradient.
\end{theorem}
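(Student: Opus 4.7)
The plan is to chain the two lemmas already established to obtain a summed descent-style inequality, separately control the variance of the averaged zeroth-order gradient estimator, and then convert the telescoping bound into the claimed expectation by exploiting the random-index sampling of $r$.

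First I would apply Lemma~2 to obtain
$$\sum_{t=0}^{T-1}(2\eta_t-L_2\eta_t^2)\,\mathbb{E}\!\left[\|P_{[0,1]}(a_t,\hat{\nabla}\mathcal{L}_{attack}(a_t),\eta_t)\|^2\right]\ \le\ \sum_{t=0}^{T-1}2\eta_t\,\mathbb{E}[\|\xi_t\|^2]+2\mu^2 L_2+c_2,$$
with $\xi_t \coloneqq \hat{\nabla}\mathcal{L}_{attack}(a_t)-\mathbb{E}[\hat{\nabla}\mathcal{L}_{attack}(a_t)\mid a_t]$. By Lemma~1 the inner conditional expectation equals $\nabla\mathcal{L}_\mu(a_t)$, so $\xi_t$ is the pure stochastic noise of the estimator relative to the smoothed gradient. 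Dividing both sides by $\sum_s(2\eta_s-L_2\eta_s^2)$ and using the sampling distribution $P(r=t)=(2\eta_t-L_2\eta_t^2)/\sum_s(2\eta_s-L_2\eta_s^2)$ rewrites the left-hand side as exactly $\mathbb{E}[\|P_{[0,1]}(a_r,\hat{\nabla}\mathcal{L}_{attack}(a_r),\eta_r)\|^2]$, the quantity to bound.

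The core technical step is controlling $\mathbb{E}[\|\xi_t\|^2]$. Because $\hat{\nabla}\mathcal{L}_{attack}(a_t)$ is an independent average over $q$ two-point spherical-direction estimators, I would invoke the standard second-moment bound for such estimators under Assumption~1 and Assumption~2, which yields an inequality of the shape
$$\mathbb{E}[\|\xi_t\|^2]\ \le\ \frac{C_1\,d\,L_1^2}{q}+C_2\,\mu^2 L_2^2 d^2$$
for absolute constants $C_1,C_2$. The $1/q$ factor comes from independent averaging, while the leading $d$ arises from the second moment of a uniformly random unit vector in $\mathbb{R}^d$.

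Combining these pieces with the prescribed step sizes $\eta_t=\Theta(1/\sqrt{T})$ and smoothing radius $\mu=\Theta(1/\sqrt{dq})$ closes the argument: the denominator $\sum_t(2\eta_t-L_2\eta_t^2)$ is $\Theta(\sqrt{T})$, the initial-gap term $c_2$ contributes $\mathcal{O}(1/\sqrt{T})$, the $2\mu^2L_2$ term is lower order, and the variance contribution collapses to $\Theta(\sqrt{T})\cdot\mathcal{O}((d+q)/q)$ once $\mu^2 d^2=1/q$ is substituted. Dividing yields the claimed $\mathcal{O}(1/\sqrt{T}+(d+q)/q)$ rate. I expect the main obstacle to be the intermediate variance calculation: carefully bounding the second moment of the averaged finite-difference estimator over spherical directions under only Lipschitz and smoothness assumptions is the delicate part of zeroth-order analysis, whereas the remaining arguments are algebraic bookkeeping of the step-size and smoothing-parameter choices.
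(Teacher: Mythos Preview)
Your proposal follows the same overall route as the paper: invoke the descent inequality, bound the noise $\mathbb{E}[\|\xi_t\|^2]$, and convert the weighted sum into the expectation over $r$ via the prescribed sampling distribution. The only difference is in the variance step: the paper simply upper-bounds the variance by the full second moment and invokes the $\mu$-independent bound $\mathbb{E}[\|\hat\nabla\mathcal{L}_{attack}(a_t)\|^2]\le (c_1 d+4q)L_1^2/(4q)$ from Liu et al.\ (2018), which produces the $(d+q)/q$ term directly without ever using the specific choice of $\mu$; your more elaborate variance bound is not needed here (and the $C_2\mu^2 L_2^2 d^2$ term, as written, should carry a $1/q$ from the averaging over $q$ directions), though after substituting $\mu=\Theta(1/\sqrt{dq})$ that term is dominated anyway and the final rate is unchanged.
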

\textbf{\emph{Proof.}}
Since $\mathbb{E}[\hat{\nabla}\mathcal{L}_{attack}(a_t)|a_t] = \nabla\mathcal{L}_\mu(a_t)$, we have:
\begin{align}
    &\mathbb{E}[\|\hat{\nabla}\mathcal{L}_{attack}(a_t) - \mathbb{E}[\hat{\nabla}\mathcal{L}_{attack}(a_t)|a_t]\|^2] \\
    &= \mathbb{E}[\|\hat{\nabla}\mathcal{L}_{attack}(a_t) - \nabla\mathcal{L}_\mu(a_t)\|^2] \\
    &\le \mathbb{E}[\|\hat{\nabla}\mathcal{L}_{attack}(a_t)\|^2] \\
    &\le \frac{(c_1 d + 4q)L_1^2}{4q},
\end{align}
where the final inequality comes from \cite{liu2018zeroth} Lemma 1. Based on Lemma 3, we have:
\begin{align}
    &\sum_{t=0}^{T-1}((2\eta_t - L_2 \eta_t^2) \mathbb{E}[\|P_{[0,1]}(a_r, \hat{\nabla} \mathcal{L}_{attack}(a_t), \eta_t)\|^2])\\
    &\le \frac{(c_1 d + 4q)L_1^2}{2q}\sum_{t=0}^{T-1}\eta_t + 2\mu^2L_2 + c_2.
\end{align}
If we sample $a_r$ from $\{a_t\}_{t=0}^{T-1}$ with the probability in Theorem 3, we can then obtain:
\begin{align}
    &\mathbb{E} [\|P_{[0,1]}(a_r, \hat{\nabla} \mathcal{L}_{attack}(a_r), \eta_r)\|^2]\\
    & \le \frac{(c_1 d + 4q)L_1^2 \sum_{t=0}^{T-1}\eta_t}{2q\sum_{t=0}^{T-1}((2\eta_t - L_2 \eta_t^2)}+\frac{2\mu^2L_2 + c_2}{\sum_{t=0}^{T-1}((2\eta_t - L_2 \eta_t^2)}.
\end{align}
If we choose $\eta_t = \frac{c_\eta}{\sqrt{T}}\in (0, 1/L_2)$, then the above inequality implies the convergence rate of $\mathcal{O}(\frac{1}{\sqrt{T}}+\frac{d+q}{q})$. 
\hfill $\square$

\section{Macro-level Graph Statistics}
\begin{itemize}
    \item \textbf{Degree Distribution} The degree distribution $P(k)$ of a
graph is defined to be the fraction of nodes in the graph
with degree $k$. It is widely used to characterize the graph statistics.
    \item \textbf{Local Clustering Coefficient (LCC).}
    The LCC of a node
measures the closeness to its neighbors to form a cluster. It is primarily proposed to determine if a graph is a small-world network.
    \item \textbf{Betweenness Centrality (BC).} Betweenness centrality measures the node centralities based on the shortest paths. For every pair of nodes in a graph, there exists at
least one shortest path among all possible paths, which contains the minimal
the number of edges. The betweenness centrality for a certain node is defined as the number of these shortest paths that pass through the node.
    \item \textbf{Closeness Centrality (CC).} Closeness Centrality is another measure
of node centrality. It is defined as the reciprocal of the sum of the length of the shortest paths between
the node and all other nodes in the graph. Generally, the
larger the CC of a node is, the closer it is to all other nodes.
\end{itemize}

\end{document}